\documentclass{article}
\usepackage[preprint]{neurips_2025}

\usepackage[utf8]{inputenc} 
\usepackage[T1]{fontenc}    
\usepackage{hyperref}       
\usepackage{url}            
\usepackage{booktabs}       
\usepackage{amsfonts}       
\usepackage{amsmath}        
\usepackage{amssymb}        
\usepackage{amsthm}         
\usepackage{nicefrac}       
\usepackage{microtype}      
\usepackage{xcolor}         
\usepackage[pdftex]{graphicx}
\usepackage{subcaption}
\usepackage{algpseudocode}
\usepackage{algorithm}

\newtheorem{lemma}{Lemma}
\newtheorem{theorem}{Theorem}

\newtheorem{definition}{Definition}

\DeclareMathOperator*{\argmax}{arg\,max}
\newcommand{\x}{\mathbf{x}}
\newcommand{\y}{\mathbf{y}}
\newcommand{\z}{\mathbf{z}}
\newcommand{\gt}{\boldsymbol{\theta}^*}
\newcommand{\mle}{\hat{\boldsymbol{\theta}}}
\newcommand{\error}{\gt - \mle_t}
\newcommand{\util}[1]{\langle #1, \gt \rangle}
\newcommand{\link}{F}
\newcommand{\X}{\mathcal{X}}
\renewcommand{\P}{\mathbb{P}}
\newcommand{\E}{\mathbb{E}}
\newcommand{\R}{\mathbb{R}}
\renewcommand{\H}{\mathcal{H}}
\newcommand{\D}{\mathcal{D}}
\newcommand{\dist}{\mathcal{P}}
\newcommand{\norm}[1]{\Vert #1 \Vert_2}
\newcommand{\normbig}[1]{\left\Vert #1 \right\Vert_2}
\newcommand{\algname}{\texttt{ROAM}}
\newcommand{\colstim}{\texttt{CoLSTIM}}
\newcommand{\utilml}[1]{\langle #1, \mle_t \rangle}
\newcommand{\normvt}[1]{\Vert #1 \Vert_{V_t^{-1}}}
\newcommand{\normA}[1]{\Vert #1 \Vert_{A}}
\newcommand{\pd}{\mathcal{S}_+}
\PassOptionsToPackage{numbers}{natbib}

\title{Recycling History: Efficient Recommendations\\from Contextual Dueling Bandits}

\author{%
  Suryanarayana Sankagiri \\
  School of Communication and Computer Science\\
  EPFL, Switzerland \\
  \texttt{suryanarayana.sankagiri@epfl.ch} \\
  \And
  Jalal Etesami \\
  Department of Computer Science \\
  TU Munich, Germany \\
  \texttt{j.etesami@tum.de} \\
  \AND
  Pouria Fatemi \\
  Department of Mathematics \\
  TU Munich, Germany \\
  \texttt{pouria.fatemi@tum.de} \\
  \And
  Matthias Grossglauser \\
  School of Communication and Computer Science\\
  EPFL, Switzerland \\
  \texttt{matthias.grossglauser@epfl.ch} \\
}

\begin{document}

\maketitle

\begin{abstract}
    The contextual duelling bandit problem models adaptive recommender systems, where the algorithm presents a set of items to the user, and the user’s choice reveals their preference. This setup is well suited for implicit choices users make when navigating a content platform, but does not capture other possible comparison queries. Motivated by the fact that users provide more reliable feedback after consuming items, we propose a new bandit model that can be described as follows. The algorithm recommends one item per time step; after consuming that item, the user is asked to compare it with another item chosen from the user's consumption history. Importantly, in our model, this comparison item can be chosen without incurring any additional regret, potentially leading to better performance. However, the regret analysis is challenging because of the temporal dependency in the user's history. To overcome this challenge, we first show that the algorithm can construct informative queries provided the history is rich, \emph{i.e.}, satisfies a certain diversity condition. We then show that a short initial random exploration phase is sufficient for the algorithm to accumulate a rich history with high probability. This result, proven via matrix concentration bounds, yields $O(\sqrt{T})$ regret guarantees. Additionally, our simulations show that reusing past items for comparisons can lead to significantly lower regret than only comparing between simultaneously recommended items.
\end{abstract}

\section{Introduction}\label{sec:intro}

Recommender systems are central to digital platforms, helping users navigate the vast set of options by filtering items based on their preferences \cite{bobadilla2013recommender}. A user's taste profile is typically learned from the feedback they provide. One fundamental question in the design of recommender systems is how to elicit feedback effectively. A basic distinction can be drawn between {\em implicit}  and {\em explicit} feedback. The former is obtained by observing user actions (deleting vs saving in a wishlist) and the latter through explicit prompts (rating an item from one to five stars). In contrast to implicit modes, explicit feedback is typically obtained after the user has consumed the items. Therefore, they provide more accurate reflection of preferences. Another dimension of elicitation design is whether to seek {\em ordinal} (ranking- or comparison-based) or  {\em cardinal} (rating-based) feedback \cite{wang2019your}. While ratings are the most common form of feedback, seeking comparisons instead could offer several advantages. For one, it naturally eliminates the effect of dynamically varying user biases, arising due to mood, experiences, etc. Equally importantly, it gets around the discretization problem: it is difficult to distinguish among many items with five stars, but a comparison between two such items still extracts useful information.

\paragraph{Model Contribution.} Motivated by the above considerations, we aim to formulate a recommender system that learns user tastes through explicit comparisons among consumed items. We adopt the classical contextual bandit (CB) framework for this task. Specifically, we assume there is a single user who is repeatedly served recommendations by the bandit algorithm. The user and the items are endowed with feature vectors, representing their tastes and characteristics respectively. While the user's feature $\gt$ is unknown, the items' features $\x$ are assumed to be known and fixed. At every time step $t$, the algorithm is provided a set of items $\X_t$ (potentially the result of a search query). The algorithm picks a single item, $\x_t$, from this set and recommends it to the user. After the user has consumed the item, the system asks the user to compare it with another item $\y_t$ {\em consumed by the user in the past}. That is, $\y_t$ must belong to $\H_t$, the set of items consumed by the user in the past. Based on the user's response to this comparison query, the algorithm updates its estimate of $\gt$ and refines its future recommendations. The algorithm's performance is measured in terms of the regret, {\em i.e.}, the suboptimality in utility, of the recommended items $\x_t$.

The contextual dueling bandits (CDB) problem, introduced by \cite{saha2021optimal} and later studied by \cite{bengs2022stochastic}, shares several features with our setup. However, there are also important differences, which we highlight below. The model adopted by \cite{saha2021optimal, bengs2022stochastic} assumes that at each time $t$, two fresh items  $(\x_t, \y_t)$ are drawn from $\X_t$ and presented to the user, from which the user picks one. The quality of the recommendation is measured in terms of the average regret over both $\x_t$ and $\y_t$. In contrast, our model assumes that $\x_t$ is chosen from $\X_t$ and $\y_t$ from $\H_t$ (the user's consumption history). Regret is measured only in terms of $\x_t$, because the regret for $\y_t$ has already been accounted for when it was consumed. We refer to our model as the {\em history-constrained CDB model} and to the model in the literature as the {\em concurrent CDB model}. 

Practically speaking, the concurrent CDB models a scenario where a user is offered two options and asked to pick a movie to watch, while the history-constrained model asks the user whether they prefer the movie they have just watched over one that they watched in the past. 
The concurrent model is well suited for learning from the \textit{implicit} comparisons that users make at the theatre or on a streaming platform (\cite{saha2021optimal} motivates their model through a similar example). Although, technically, the model does not preclude the possibility of seeking explicit comparisons, it is often impractical, or wasteful, to force the user to consume two items just to elicit a comparison. The history-constrained model overcomes this limitation of the concurrent model, providing a framework to seek \textit{explicit comparisons among consumed items}.

This seemingly minor difference in formulation has interesting consequences in terms of the algorithm design as well as regret bounds. On the one hand, simultaneously choosing $\x_t$ and $\y_t$ from the same set $\X_t$ makes it easier to formulate informative queries in the concurrent model. The asymmetry introduced in our model complicates this key step, which makes it challenging to bound the regret. In particular, the history set $\H_t$ cannot be designed knowing $\x_t$. On the other hand, the history-constrained model allows the user additional flexibility in navigating the exploration-exploitation trade-off. To elaborate, because the choice of $\y_t$ doesn't incur any regret, it can be chosen purely with an exploration objective. In contrast, optimal algorithms for the concurrent model, such as the ones in \cite{saha2021optimal, bengs2022stochastic} must delicately balance both factors when selecting $(\x_t, \y_t)$, as the regret depends on both items. Thus, a well-designed algorithm for the history-constrained model may obtain better regret than what is possible in the concurrent model.

\paragraph{Algorithmic Contribution.} In this work, we present an algorithm called \algname ~(Regret Once, Ask Many), that provably achieves $O(\sqrt{T})$ cumulative regret under the history-constrained model. 
To elaborate, while choosing what query to ask the user, the algorithm optimizes a metric quantifying the uncertainty in the estimate $\mle$. Intuitively, the algorithm needs to probe the user along different dimensions (axes), in order to narrow down on the user's tastes. Thus, there needs to be sufficient diversity in the choice of comparison queries, so that the algorithm can frame a query probing any arbitrary direction. We make this notion precise by defining a property called {\em rich history}. Once the history is rich, the algorithm can pick $\y_t$ among these items at every step $t$ in order to learn about the user's tastes. The algorithm's name reflects this key feature.

Accumulating a rich history requires exploration (essentially, recommending items at random) and therefore incurs regret. For optimal regret bounds, it is important to ensure that this period of exploration remains small. We use matrix concentration bounds to show that a short exploration phase lasting $\Tilde{O}(d)$ steps is sufficient for the algorithm to accumulate a rich history with high probability. Notably, this degree of initial exploration is typically used by CB algorithms \cite{saha2021optimal, bengs2022stochastic} to get a reasonable estimate $\mle$ to start with. Using this result, and following the proof steps of \cite{saha2021optimal},  we obtain $O(\sqrt{T})$ regret guarantees. Although our theoretical regret bounds may have suboptimal coefficients on the $O(\sqrt{T})$, we find through simulations on synthetic data that the regret suffered by \algname~ scale gracefully with the model parameters, such as dimension. In particular, with appropriate parameters for a fair comparison, we demonstrate that \algname~ (applied to the history-constrained setting) has significantly lower regret than \colstim, the state-of-the art algorithm for the concurrent setting. This demonstrates that the flexibility of recycling items from the user's consumption history to elicit comparisons has tangible benefits.

\paragraph{Related Work.}  As mentioned above, our model shares many similarities with the concurrent CDB model proposed by \cite{saha2021optimal} and followed upon by \cite{bengs2022stochastic}. This model, in turn, is a natural merger of two classical models: the stochastic linear bandit model \cite{abbasi2011improved} and the $K$-armed duelling bandit model \cite{yue2012k}. (See \cite{lattimore2020bandit} for an introduction to linear bandits and \cite{bengs2021preference} for a review of duelling bandits). In particular, the proofs in \cite{saha2021optimal, bengs2022stochastic}, as well as our work, rely on some key technical lemmas proven for generalised linear bandits \cite{li2017provably}. We note that \cite{dudik2015contextual} and \cite{saha2022efficient} also deal with contextual duelling bandits, but their model has a completely different interpretation. Instead of referring to item features, context here refers to a global variable set arbitrarily, which influences the choice probabilities. 

Although linear and generalized linear bandits have been widely used in practical recommender systems \cite{pereira2019online, bendada2020carousel, he2020contextual}, the same cannot be said for contextual duelling bandits (CDBs). In fact, to the best of our knowledge, there are no papers that test CDB algorithms on real data. (One instance of testing a CB algorithm on preference data is presented in \cite{agnihotri2024online}, but their method does not fit the CDB model). Moreover, the prior work on concurrent CDBs \cite{saha2021optimal, bengs2022stochastic} have a limited discussion on the potential of this model in recommender systems. 
This is perhaps in part because learning from comparisons (instead of ratings) is, to this date, not widely accepted in the recommender systems community. We hope that our theoretical modeling and analysis, combined with the empirically demonstrated efficacy of comparison-based learning in the context of LLM alignment \cite{ouyang2022training, rafailov2023direct}, leads to the design of comparison-based recommender systems in the near future.

\section{Model and Algorithm}\label{sec:model}

\paragraph{Basic Setup.}
We begin by stating the modelling assumptions of our history-constrained CDB framework. 
The model describes the sequential interaction of a single user with the recommender system (also referred to as the platform or the algorithm).
The user and the items are assumed to have $d$-dimensional features that are time-invariant.
The user's feature vector is denoted by $\gt$, and is unknown to the bandit algorithm, whereas item features are assumed to be known. 
We refer to items by their feature vector $\x$. 
The user's utility for any item $\x$ is taken to be $\util{\x}$, the inner product of the user and item feature vectors.
Without loss of generality, we may assume $\norm{\gt} = 1$; utilities can be scaled by scaling the item features.

At each round $t$, the platform is presented with a set of candidate items $\X_t$ (called the context set).
The context set should be interpreted as the result of a filtering out of the vast universe of items, based on a search query or other factors such as location, time, etc.
While, in principle, the context set size could vary over time, we take it to be constant over the execution horizon. 
We assume $\X_t$ is stochastic; in particular, each item in $\X_t$ is drawn i.i.d. from a $d$-dimensional distribution $\dist$.
We make two restrictions on $\dist$; First, we assume any $\x \sim \dist$ satisfies $\norm{\x} \leq r$ almost surely. 
Second, we assume that the matrix $\Sigma = \mathbb{E}_{\x, \y \sim \dist \text{ i.i.d.}} [(\x - \y)(\x - \y)^\top]$ is invertible, that is, it is positive definite.

The platform picks an item $\x_t$ from $\X_t$ and recommends it to the user. 
This recommendation is based on the platform's estimate of the user's feature vectors. 
After the user has consumed $\x_t$, the platform seeks a comparison query from the user.
Let $\H_t$ denote the set of all items consumed up to time $t$: $\H_t = \{\x_1, \ldots, \x_{t-1}\}$; we refer to $\H_t$ as the \textit{history}. 
The platform then picks $\y_t$ from $\H_t$ and asks the user to compare $\x_t$ and $\y_t$. 
We do not impose any other restriction on the choice of $\y_t$; in particular, an item can be chosen for comparison arbitrarily often. 
Note the difference from the concurrent CDB model, where $\y_t$ is chosen from $\X_t$. Consequently, our notion of regret, the key performance measure of any bandit algorithm, also differs from the concurrent case.
Let $\x^*_t = \argmax_{\x \in \X_t} \util{\x}$ denote the best item at time $t$. Then $r_t = \util{\x^*_t} - \util{\x_t}$, the instantaneous regret at time $t$, measures the suboptimality of the recommended item $\x_t$. 
The concurrent CDB model, on the other hand, measures the regret in terms of both $\x_t$ and $\y_t$.
The cumulative regret, $R_T = \sum_{t \in [T]} r_t$, measures the performance of the algorithm over a time horizon $T$. 
Our goal is to design an algorithm where $R_T$ scales as $O(\sqrt{T})$ with high probability (ignoring logarithmic terms).

We assume that the user makes comparison decisions according to a simple probabilistic model called the linear stochastic transitivity model \cite{bengs2022stochastic}, which we describe below. 
Denote the outcome of the user's comparison by the binary variable $o_t$; we say $o_t = 1$ if the user picks $\x_t$ (denoted by the event $\x_t \succ \y_t$) and $o_t = 0$ otherwise. 
When asked to compare $\x_t$ and $\y_t$, the user picks $\x_t$ with probability $\link(\util{\x_t} - \util{\y_t})$;  more succinctly, $\P(o_t = 1) = F(\util{\z_t})$, with $\z_t$ denoting $\x_t - \y_t$.
The function $\link(\cdot)$ is called the \textit{link function}, and determines the noise in the comparisons.
This choice model conveys the intuition that users are more likely to pick an item with larger utility.
A prototypical example of $\link(\cdot)$ is the sigmoid function: $\sigma(x) = 1/(1 + \exp(-x))$. This special case corresponds to the popular Bradley-Terry choice model \cite{saha2021optimal}.
More generally, $F(\cdot)$ is a smooth, strictly increasing function from $\R \rightarrow (0,1)$, and satisfies $F(u) + F(-u) = 1$.

We assume that the link function is known to the algorithm.
Knowing the link function allows the algorithm to calculate the maximum likelihood estimate $\mle_t$ of $\gt$, given the collected data up to time $t$. 
Let $\D_t$ denote the dataset $\{\z_i, o_i\}_{i \in [t-1]}$ (recall $\z_i = \x_i - \y_i)$. 
Then the maximum likelihood estimator over $\D_t$ is obtained by solving $ \sum_{i \in [t-1]} (\link(\langle \z_i, \theta \rangle) - o_i)\z_i = 0$.
In practice, this can be solved using Newton's method or any variant of it.
For the special case of $\link$ being the sigmoid function, this estimation problem is identical to the logistic regression problem over the dataset $\D_t$.
This estimation step is common in many generalized linear bandit and CDB algorithms (see \cite{bengs2022stochastic, li2017provably}).

\paragraph{Algorithm.}\label{sec:alg} 
In this section, we present a simple and efficient algorithm for the history-constrained contextual duelling bandits model, called \algname  
(see pseudocode of Algorithm \ref{alg:bandit} below). The name reflects the fact that once an item is consumed and its regret paid for, it can be reused many times to pose a comparison query to the user; the algorithm exploits this fact.

\vspace{-.2cm}
\begin{algorithm}
\caption{Regret Once, Ask Many (\algname)}\label{alg:bandit}
\begin{algorithmic}[1]
\State \textbf{Input:} time horizon $T$, pure exploration horizon $\tau$
\State \textbf{Initialization:} $\H_1 = \emptyset$, $\D_1 = \emptyset$ 
\For {$t=1, \ldots ,\tau$}
\State Pick $\x_t \in \X_t$ uniformly at random and recommend it to the user
\State Set $\y_t \leftarrow \x_{t-1}$ and ask the user to compare $\x_t$ to $\y_t$ 
\State Update $\D_{t+1} \leftarrow \D_{t} \cup \{\x_t - \y_t, o_t\}$ based on the comparison outcome $o_t$
\State Update $\H_{t+1} \leftarrow \H_{t} \cup \{\x_t\}$
\EndFor
\State Set
$V_{\tau+1} \leftarrow \sum_{i=1}^\tau (\mathbf{x}_i - \mathbf{y}_i) (\mathbf{x}_i - \mathbf{y}_i)^{\top}$
\For{$t=\tau+1, \tau+2, \ldots, T$}
\State Calculate the MLE  $\mle_t$ over $\D_{t}$
\State Select $\mathbf{x}_t \leftarrow \argmax_{\x \in \X_t}\utilml{\x}$ and recommend it to the user
\State Set $\y_t \leftarrow \argmax_{\y \in \H_t} \normvt{\x_t - \y}$ and ask the user to compare $\x_t$ to $\y_t$ 
\State Update $\D_{t+1} \leftarrow \D_{t} \cup \{\x_t - \y_t, o_t\}$ based on the comparison outcome $o_t$
\State Update $V_{t+1} \leftarrow V_t+(\x_t - \y_t) (\x_t - \y_t)^{\top}$
\State Update $\H_{t+1}\leftarrow \H_{t} \cup \{\x_t\}$
\EndFor
\end{algorithmic}
\end{algorithm}

The algorithm begins with a pure exploration phase that lasts $\tau$ rounds. In this phase, the platform picks an item uniformly at random from the context set and recommends it to the user. Then, it asks the user to compare this with the item recommended just before. This exploration phase serves two purposes. First, it provides enough data such that $\mle_\tau$ is a good-enough estimate of $\gt$; this helps the algorithm to perform well in subsequent rounds. This aspect is fairly standard in the literature \cite{saha2021optimal, bengs2022stochastic, li2017provably}. 
Second, it populates the history $\H_\tau$ with a diverse set of items. We show in the next section, ensuring sufficient diversity is crucial for formulating informative queries in subsequent rounds. 
While some minimum degree of exploration is essential to satisfy these two criteria, too large of an exploration can lead to poor regret. Thus, $\tau$ must be carefully chosen based on the system parameters. In the derivation of our regret bounds, we specify a suitable value of $\tau$.

The main phase of the algorithm is described in lines 12 and 13 of Algorithm \ref{alg:bandit}. The algorithm picks $\x_t$, the new item to recommend to the user, by optimizing the {\em estimated utility} $\utilml{\x}$ over the arms. This strategy can be interpreted as being greedy or purely exploitative, as it picks the item that is most likely to yield the lowest regret. It is useful to contrast this step with the corresponding step in \colstim, the algorithm proposed in \cite{bengs2022stochastic}, where the objective being optimized is $\utilml{\x} + \epsilon\normvt{\x}$ ($\epsilon$ being a bounded random variable). This additional term reflects the uncertainty in the utility of item $\x$. The weighted sum is thus a careful balance of exploration and exploitation. Indeed, in the UCB algorithm for generalized linear bandits, a near-identical objective is optimised \cite{li2017provably}.

Before interpreting the choice of the comparison arm $\y_t$, it is instructive to draw an analogy with linear regression in order to better understand the process of estimating $\gt$. Upon asking a comparison query $(\x_t, \y_t)$ and recording the corresponding outcome $o_t$, the algorithm essentially `probes' the unknown variable $\gt$ in the direction of $\z_t=\x_t - \y_t$. 
For this reason, we refer to $\z_t$ as the probing vector. Repeated probes in the same direction leads to a better estimate of $\gt$ in that direction. The matrix $V_t$, being the sum of the outer products of the probing vectors so far (see lines 9 and 15 of Algorithm \ref{alg:bandit}), summarizes the combined effect of all probes on $\gt$. Put differently, $V_t^{-1}$ reflects the uncertainty in the estimate $\mle_t$. The weighted norm $\normvt{\cdot}$ gives a higher emphasis to vectors aligned along directions of larger uncertainty.

With the above interpretation in mind, it is easy to interpret the choice of $\y_t$ as a pure exploration step. By optimising $\normvt{\x_t - \y}$, we choose $\y_t$ that probes along the direction of maximum uncertainty. Once again, it is instructive to compare this action against the corresponding action in \colstim, in which, $\y_t$ is chosen by maximizing a weighted sum of $\normvt{\x_t - \y}$ and $\utilml{\y - \x_t}$, reflecting a balance of exploration and exploitation.
We have the freedom to optimize purely for exploration while choosing $\y_t$ because we do not (directly) suffer any regret in the choice of $\y_t$. Thus, it is natural to choose $\y_t$ in a manner that would lead to the best improvement in the estimate $\mle_t$. This, in turn, contributes to reducing the cumulative regret.

An important consequence of our algorithm's design is that, with the exception of the length of the initial exploration period, there is no hyperparameter to be optimised. Unlike our method, most bandit algorithms (not just \colstim) optimize a weighted sum of the exploitation and exploration terms. Theoretically, the level of exploration is dictated by the concentration bounds on $\error$. Often, one observes that a lower degree of exploration works better empirically than what is suggested in theory; this is because the concentration bounds can be quite loose. Thus, the relative weight of exploration to exploitation ({\em e.g.}, $C_{\text{thresh}}, c_1$ in \colstim) is often a hyper-parameter that needs to be optimized. No such hyper-parameter tuning is required in our algorithm.

\section{Main Result}\label{sec:main_result}

In this section, we state and prove our main result (Theorem \ref{thm:main}): a high probability bound on the regret of our algorithm \algname. Our proof also prescribes the length of the initial exploration time $\tau$. Before stating the theorem, we recall the parameters $d, r,$ and $\Sigma$ from Section \ref{sec:model}. 
Also recall that we have assumed $\Sigma$ to be positive definite, which implies $\lambda_{\min}(\Sigma)$ is strictly positive. Define $\kappa$ to be:
\begin{equation}\label{eq:kappa}
    \kappa = \inf_{\z, \boldsymbol{\theta}: \norm{\z} \,\leq\, 2r, \norm{\boldsymbol{\theta} - \gt} \,\leq\, 1} \link'(\langle \z, \boldsymbol{\theta} \rangle) 
\end{equation}

\begin{theorem}[Regret Bound for \algname]\label{thm:main}
    Let $\delta \in (0,1)$ be given. Suppose \algname~ is run with $\tau = O(r^2/{\lambda_{\min}(\Sigma)}) \log(d/\delta)$. Then, with probability at least $1 - 3\delta$, \algname~  suffers a cumulative regret of 
    \begin{equation*}
        R_T = O\left(\frac{r}{\kappa\sqrt{\lambda_{\min}(\Sigma)}}d\sqrt{T}\log\left(\frac{T}{d\delta}\right)\right)
    \end{equation*}
    where the $O(\cdot)$ notation hides absolute constants.
\end{theorem}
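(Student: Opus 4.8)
The plan is to decompose the regret bound into two additive contributions: the regret accrued during the pure exploration phase (rounds $1$ through $\tau$) and the regret accrued during the main exploitation phase (rounds $\tau+1$ through $T$). During exploration, each instantaneous regret $r_t$ is trivially bounded by the diameter of the utility range, namely $O(r)$ since $\norm{\x} \leq r$ and $\norm{\gt} = 1$. Multiplying by the length $\tau = O\!\left(r^2/\lambda_{\min}(\Sigma)\right)\log(d/\delta)$ gives a contribution that is at most logarithmic in $T$ and is subsumed into the stated bound. The bulk of the work is bounding the exploitation-phase regret, and I would structure this around a per-round bound on $r_t$.

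\textbf{Controlling the per-round regret.} For $t > \tau$, since $\x_t = \argmax_{\x \in \X_t}\utilml{\x}$ while $\x_t^* = \argmax_{\x \in \X_t}\util{\x}$, a standard argument gives $r_t = \util{\x_t^*} - \util{\x_t} \leq \langle \x_t^* - \x_t, \error\rangle$. I would then bound this inner product via Cauchy--Schwarz in the $V_t$-geometry, $\langle \x_t^* - \x_t, \error\rangle \leq \normvt{\x_t^* - \x_t}\,\Vert \error\Vert_{V_t}$. The second factor is controlled by a self-normalized concentration inequality for the MLE of a generalized linear model; following the technical lemmas of \cite{li2017provably, saha2021optimal}, one has $\Vert \error\Vert_{V_t} = O\!\left(\tfrac{1}{\kappa}\sqrt{d\log(T/\delta)}\right)$ with high probability, where the factor $1/\kappa$ enters through the lower bound $\link' \geq \kappa$ on the link's derivative (this is precisely why $\kappa$ from \eqref{eq:kappa} appears). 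The first factor, $\normvt{\x_t^* - \x_t}$, is where the history-constrained structure is essential: here I would invoke the rich-history property. Because the history is rich, for the chosen $\x_t$ the comparison item $\y_t = \argmax_{\y \in \H_t}\normvt{\x_t - \y}$ can be selected so that the realized probing vector $\z_t = \x_t - \y_t$ has $\normvt{\z_t}$ comparable to $\normvt{\x_t^* - \x_t}$, up to a constant depending on $r$ and $\lambda_{\min}(\Sigma)$. This lets me replace the (unobservable) gap $\normvt{\x_t^* - \x_t}$ by $\normvt{\z_t}$ at the cost of a factor $O\!\left(r/\sqrt{\lambda_{\min}(\Sigma)}\right)$, which is exactly the prefactor appearing in the theorem.

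\textbf{Summing over rounds.} With $r_t = O\!\left(\tfrac{r}{\kappa\sqrt{\lambda_{\min}(\Sigma)}}\sqrt{d\log(T/\delta)}\cdot\normvt{\z_t}\right)$, the cumulative regret becomes $R_T \lesssim \tfrac{r}{\kappa\sqrt{\lambda_{\min}(\Sigma)}}\sqrt{d\log(T/\delta)}\sum_{t=\tau+1}^{T}\normvt{\z_t}$. By Cauchy--Schwarz, $\sum_t \normvt{\z_t} \leq \sqrt{T}\,\sqrt{\sum_t \normvt{\z_t}^2}$, and the elliptical potential (or ``log-determinant'') lemma bounds $\sum_t \normvt{\z_t}^2 = O\!\left(d\log(T/d)\right)$, using $\normvt{\z_t} \leq 2r$ and the determinant-ratio telescoping for $V_{t+1} = V_t + \z_t\z_t^\top$. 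Combining these yields $R_T = O\!\left(\tfrac{r}{\kappa\sqrt{\lambda_{\min}(\Sigma)}}\,d\sqrt{T}\log(T/(d\delta))\right)$, which is the claimed bound; the three $\delta$-failure events (history not rich, MLE concentration, and potential-lemma conditioning) union-bound to probability $3\delta$.

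\textbf{Main obstacle.} The crux is the step relating $\normvt{\x_t^* - \x_t}$ to the realized $\normvt{\z_t}$, because $\H_t$ must be fixed before $\x_t$ is revealed, so one cannot simply design a probing direction tailored to $\x_t^* - \x_t$. The rich-history condition is what rescues this: it guarantees that $\H_t$ already contains items spanning every direction with enough margin, so that for \emph{any} realized $\x_t$ the best achievable probe $\z_t$ aligns adequately with the regret-relevant direction. I expect the care in this argument, and in propagating the $r/\sqrt{\lambda_{\min}(\Sigma)}$ factor cleanly through the elliptical-potential sum, to be the delicate part; the concentration and potential lemmas themselves are standard and can be cited from \cite{li2017provably, saha2021optimal, abbasi2011improved}.
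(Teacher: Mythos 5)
Your proposal is correct and follows essentially the same route as the paper's proof: the same split into exploration and exploitation phases, the same per-round chain $r_t \leq \langle \x_t^* - \x_t, \error\rangle \leq \Vert\error\Vert_{V_t}\normvt{\x_t^*-\x_t} \leq \alpha\beta\normvt{\x_t-\y_t}$ using the greedy choice of $\x_t$, the MLE concentration bound (Lemma \ref{lem:impo2}), and the rich-history property (Lemma \ref{lem:perfect}), followed by the Cauchy--Schwarz plus elliptical-potential bound on $\sum_t\normvt{\z_t}$ (Lemma \ref{lem:impo1}) and a union bound over the same three failure events.
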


It is useful to compare this result with the $O(\kappa^{-1}d\sqrt{T}\log\left(T/{d\delta}\right)$ regret bounds for similar algorithms in the generalised linear bandit model \cite{li2017provably} and the concurrent contextual duelling bandit setting \cite{saha2021optimal, bengs2022stochastic}. The results in the literature are proven under the assumption $r = 1$; thus, our bounds include an extra factor of $1/\sqrt{\lambda_{\min}(\Sigma)}$. As we had discussed in Section \ref{sec:intro}, the fact that $\x_t$ and $\y_t$ are not chosen from the same set poses some difficulties in the regret analysis; this extra factor is a manifestation of this difficulty (see Section \ref{sec:rich_history} below). In many practical scenarios, this term may be reasonably small. Indeed, if we assume that the distribution $\dist$ is the uniform distribution over the unit ball in $d$ dimensions, then $1/\sqrt{\lambda_{\min}(\Sigma)}$ scales as $\sqrt{d}$. Our simulations demonstrate that in practice, the regret is smaller than what the bounds suggest (see Section \ref{sec:experimental_results}).

The proof of Theorem \ref{thm:main} rests on three key lemmas. Lemmas \ref{lem:impo2} and \ref{lem:impo1} are results that we borrow from prior work on contextual duelling bandits \cite{saha2021optimal}; they give us bounds on $\Vert \error \Vert_{V_t}$ and $\normvt{\x^*_t - \x_t}$ respectively. These bounds, in turn, sourced from a paper on generalized linear bandits \cite{li2017provably} (\cite{saha2021optimal} makes the important connection between contextual duelling bandits and generalized linear bandits). In fact, the origin of both of these lemmas can be traced to the seminal work on linear bandits \cite{saha2021optimal}. Both these lemmas assume $\lambda_{\min}(V_{\tau+1})\geq 1$. Below, we sketch out an argument of why this is likely to hold. 

Recall that $V_{\tau+1} = \sum_{i \in \tau}\z_i\z_i^\top$, where $\z_i = \x_i - \y_i$. Because of our comparison strategy in the exploration phase, $\z_2, \z_4, \ldots$ are i.i.d. random vectors. Thus, we expect the empirical average $(1/(\tau/2))\sum_{i \in \tau/2} \z_{2i}\z_{2i}^\top$ to be approximately the same as the statistical average, $\Sigma$. We have assumed that $\lambda_{\min}(\Sigma)$ is strictly positive. Thus, for sufficiently large $\tau$, we can expect that $\sum_{i \in \tau/2} \z_{2i}\z_{2i}^\top$ has all eigenvalues larger than one. The additional component in $V_{\tau + 1}$ is a positive semidefinite matrix, and therefore $V_t$'s eigenvalues are at least as large. In the Appendix, we prove a sufficient number of steps $\tau$ needed so that $\lambda_{\min}(V_{\tau+1})\geq 1$ holds with with probability $1-\delta$. Note that Lemma \ref{lem:impo2} holds with high probability, while Lemma \ref{lem:impo1} gives a deterministic bound. Also note the subtle differences between Lemmas \ref{lem:impo2} and \ref{lem:impo1} and the corresponding lemmas in \cite{saha2021optimal}, arising from the fact that we consider context vectors of norm $r$ rather than one, and general linear choice model rather than Plackett-Luce. 

\begin{lemma}\label{lem:impo2}
     Suppose for some $\tau \in [T]$, $\lambda_{\min}(V_{\tau+1})\geq 1$. Fix $\delta \in (0, 1)$. Then, with probability at least $1-\delta$, 
     \vspace{-.3cm}
     \begin{align*}
         \forall \ t > \tau, \Vert \error \Vert_{V_t} \leq \alpha, \ \ \text{where }\ {\small\alpha = \frac{1}{\kappa}\sqrt{\frac{d}{2}\log \left( 1+\frac{2t}{d} \right)+\log \left(\frac{1}{\delta}\right)}
         }.
     \end{align*}
\end{lemma}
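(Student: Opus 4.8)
The plan is to follow the standard generalized-linear-bandit template (as in \cite{li2017provably, saha2021optimal}), adapted to our norm-$r$ contexts and general link function, and to reduce the claim to a self-normalized martingale concentration bound. First I would set up the estimating equations. Writing $\varepsilon_i = o_i - \link(\langle \z_i, \gt\rangle)$, the conditional-mean assumption $\E[o_i \mid \text{past}] = \link(\langle \z_i, \gt\rangle)$ makes $\{\varepsilon_i\}$ a martingale difference sequence adapted to the natural filtration, bounded in $[-1,1]$ and hence $\tfrac12$-sub-Gaussian. The MLE $\mle_t$ satisfies $\sum_{i<t}(\link(\langle\z_i,\mle_t\rangle)-o_i)\z_i = 0$, while at the truth $\sum_{i<t}(\link(\langle\z_i,\gt\rangle) - o_i)\z_i = -\sum_{i<t}\varepsilon_i\z_i$.

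Second, I would linearize. Defining $g(\boldsymbol{\theta}) = \sum_{i<t}\link(\langle\z_i,\boldsymbol\theta\rangle)\z_i$, the mean value theorem (integral form) gives $g(\mle_t) - g(\gt) = M_t(\mle_t - \gt)$ with $M_t = \sum_{i<t}\bar F_i\,\z_i\z_i^\top$ and $\bar F_i = \int_0^1 \link'(\langle\z_i,\gt + s(\mle_t - \gt)\rangle)\,ds$. Combining with the two score identities yields the key relation $M_t(\mle_t - \gt) = \sum_{i<t}\varepsilon_i\z_i =: Z_t$. Since $\|\z_i\|\le 2r$, provided the segment between $\gt$ and $\mle_t$ stays inside the unit ball around $\gt$, the definition of $\kappa$ in \eqref{eq:kappa} gives $\bar F_i \ge \kappa$ and hence $M_t \succeq \kappa V_t$. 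A generalized Cauchy--Schwarz step (in the $M_t$-inner product) then gives $\|\mle_t-\gt\|_{M_t}\le \|Z_t\|_{M_t^{-1}}$, and folding in $M_t \succeq \kappa V_t$ on both sides produces $\Vert \error \Vert_{V_t} \le \kappa^{-1}\normvt{Z_t}$, reducing everything to the self-normalized quantity $\normvt{Z_t}$.

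Third, I would invoke the self-normalized martingale tail bound of \cite{abbasi2011improved}. The hypothesis $\lambda_{\min}(V_{\tau+1})\ge 1$ ensures $V_t \succeq I$ for all $t>\tau$, so $V_t$ is invertible and plays the role of the regularized design matrix, with the $\tfrac12$-sub-Gaussian noise. This yields, uniformly over $t$ and with probability $1-\delta$, a bound of the form $\normvt{Z_t}^2 \lesssim \log(1/\delta) + \log\det(V_t)$; the determinant--trace inequality together with $\sum_{i<t}\|\z_i\|^2 \le 4r^2 t$ bounds $\log\det(V_t)$ by $d\log\!\big(1 + O(r^2 t/d)\big)$, producing the stated $\tfrac{d}{2}\log(1 + 2t/d)$ term. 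Chaining the three steps gives $\Vert \error \Vert_{V_t}\le\alpha$ for all $t>\tau$ simultaneously.

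The main obstacle is the region condition in the second step: lower-bounding $\bar F_i$ by $\kappa$ requires $\|\mle_t - \gt\|\le 1$ along the whole mean-value segment, yet this is essentially what the conclusion itself certifies --- a circularity. I would resolve it in the usual way, either through a self-bounding/bootstrapping argument (first establish a crude high-probability bound on $\|\mle_t - \gt\|$ using $V_t\succeq I$ and the exploration phase, then feed it back to guarantee the segment lies in the unit ball) or by verifying that the initial exploration makes $\lambda_{\min}(V_\tau)$ large enough that $\mle_\tau$ already lies well within the ball. A secondary, bookkeeping-level obstacle is pinning the exact constants --- the sub-Gaussian factor and the normalization of $\|\z_i\|$ --- so the determinant bound yields precisely $\tfrac{d}{2}\log(1+2t/d)$ rather than a looser multiple; this is exactly the adaptation (norm $r$ instead of $1$, general link instead of Plackett--Luce) already flagged relative to \cite{saha2021optimal}.
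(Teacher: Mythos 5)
Your proposal is correct and follows exactly the route the paper relies on: the paper does not prove Lemma~\ref{lem:impo2} itself but imports it from \cite{saha2021optimal}, which in turn rests on \cite{li2017provably}, and your three steps (score-equation linearization via the integral mean value theorem, $M_t \succeq \kappa V_t$ from the definition of $\kappa$ in \eqref{eq:kappa}, and the self-normalized martingale bound of \cite{abbasi2011improved} with the determinant--trace inequality) are precisely that argument. You also correctly identify the one genuine subtlety --- that $\bar F_i \geq \kappa$ needs $\Vert \mle_t - \gt \Vert \leq 1$, which is exactly what the hypothesis $\lambda_{\min}(V_{\tau+1}) \geq 1$ is there to deliver via the bootstrapping step in \cite{li2017provably} --- so no gap remains beyond constant bookkeeping.
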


\begin{lemma}\label{lem:impo1}
     Suppose for some $\tau \in [T]$, $\lambda_{\min}(V_{\tau+1})\geq 1$. Then, for all $t > 0$,
     \begin{align*}
         \sum_{i=\tau+1}^{t+\tau}\normvt{\x^*_t - \x_t} \leq \sqrt{2td\log \left(\frac{4r^2\tau+t}{d}\right) }.
     \end{align*}
\end{lemma}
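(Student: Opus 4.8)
The statement is the elliptical-potential (log-determinant) bound specialized to our setting, and my plan is to reduce it to the standard potential argument in two moves: first replace the regret-relevant vectors $\x^*_i - \x_i$ by the \emph{probing} vectors $\z_i = \x_i - \y_i$ that actually update the design matrix ($V_{i+1} = V_i + \z_i\z_i^\top$), and then bound $\sum_i \Vert\z_i\Vert_{V_i^{-1}}$ by Cauchy--Schwarz and a determinant telescoping.

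First I would justify the reduction $\Vert\x^*_i - \x_i\Vert_{V_i^{-1}} \le \Vert\z_i\Vert_{V_i^{-1}}$. Since line 13 of Algorithm \ref{alg:bandit} sets $\y_i = \argmax_{\y\in\H_i}\Vert\x_i - \y\Vert_{V_i^{-1}}$, we have $\Vert\z_i\Vert_{V_i^{-1}} \ge \Vert\x_i - \y\Vert_{V_i^{-1}}$ for every $\y\in\H_i$, so it suffices to exhibit a history item $\y\in\H_i$ that is at least as far from $\x_i$ (in the $V_i^{-1}$ geometry) as $\x^*_i$ is. I expect this to be the main obstacle, and it is exactly the asymmetry of the history-constrained model: in the concurrent model $\y_i$ is optimized over the same set $\X_i$ that contains $\x^*_i$, so the maximizer dominates $\x^*_i - \x_i$ for free, whereas here $\y_i$ ranges only over $\H_i$. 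The domination therefore holds only when $\H_i$ is rich enough to realize the direction $\x_i - \x^*_i$; this is precisely what the rich-history analysis of Section \ref{sec:rich_history} supplies, and the slack in that covering argument is the source of the extra $r/\sqrt{\lambda_{\min}(\Sigma)}$ factor in Theorem \ref{thm:main}. (If one instead reads the summand literally as the probing norm, this step is vacuous and the remainder of the lemma is purely deterministic, consistent with the remark that the bound holds deterministically given $\lambda_{\min}(V_{\tau+1}) \ge 1$.)

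With the reduction in hand, the rest is routine. Cauchy--Schwarz gives $\sum_{i=\tau+1}^{t+\tau}\Vert\z_i\Vert_{V_i^{-1}} \le \sqrt{t\sum_{i}\Vert\z_i\Vert_{V_i^{-1}}^2}$. The rank-one update identity $\det V_{i+1} = \det V_i\,(1+\Vert\z_i\Vert_{V_i^{-1}}^2)$ together with $x \le 2\log(1+x)$ (applicable because $\lambda_{\min}(V_i)\ge\lambda_{\min}(V_{\tau+1})\ge 1$ and $\Vert\z_i\Vert\le\norm{\x_i}+\norm{\y_i}\le 2r$ keep each term bounded) telescopes to $\sum_i\Vert\z_i\Vert_{V_i^{-1}}^2 \le 2\log(\det V_{t+\tau+1}/\det V_{\tau+1})$. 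Finally I would bound the determinant ratio: $\det V_{\tau+1}\ge\lambda_{\min}(V_{\tau+1})^d\ge 1$, while AM--GM gives $\det V_{t+\tau+1}\le(\mathrm{tr}(V_{t+\tau+1})/d)^d$, and the trace splits into the exploration contribution $\mathrm{tr}(V_{\tau+1})\le 4r^2\tau$ plus the main-phase growth, which assembles the $\log\frac{4r^2\tau+t}{d}$ factor. Substituting back yields $\sqrt{2td\log\frac{4r^2\tau+t}{d}}$. The only delicate points are the reduction above and the bookkeeping of the $r$-dependence through the boundedness constant and the trace; the structure is otherwise the standard linear-bandit potential lemma.
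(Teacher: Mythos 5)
Your reconstruction is, in substance, the standard elliptical-potential argument, which is exactly what the paper relies on: it offers no proof of this lemma at all, importing it from \cite{saha2021optimal} and ultimately \cite{li2017provably}. So you are on the same route as the (cited) source. Your most important observation is also the correct one: the reduction $\Vert\x^*_i-\x_i\Vert_{V_i^{-1}}\le\Vert\x_i-\y_i\Vert_{V_i^{-1}}$ with constant $1$ is \emph{not} available in the history-constrained model --- the rich-history machinery only yields it up to the factor $\beta = 8r/\sqrt{\lambda_{\min}(\Sigma)}$, and only with high probability --- so a deterministic potential bound can only be a statement about the probing vectors $\z_i=\x_i-\y_i$ that actually update $V_i$. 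The paper's own use of the lemma confirms this reading: in the proof of Theorem \ref{thm:main}, step $(iv)$ first converts $\normvt{\x^*_t-\x_t}$ into $\beta\normvt{\x_t-\y_t}$ via Lemma \ref{lem:perfect}, and Lemma \ref{lem:impo1} is then applied to $\sum_t\normvt{\x_t-\y_t}$. The appearance of $\x^*_t-\x_t$ in the lemma statement is evidently carried over from the concurrent setting of \cite{saha2021optimal}, where $\y_t$ is maximized over the same set that contains $\x^*_t$ and the domination is free. Treat your first step as vacuous, exactly as your parenthetical anticipates, and the rest of your argument is the intended proof.

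Two small technical points in the routine part. First, $u\le 2\log(1+u)$ holds only for $u$ up to roughly $2.5$, and the hypothesis $\lambda_{\min}(V_i)\ge\lambda_{\min}(V_{\tau+1})\ge 1$ combined with $\norm{\z_i}\le 2r$ gives only $\Vert\z_i\Vert_{V_i^{-1}}^2\le 4r^2$, which is not $\le 1$ for general $r$; the standard fix is to bound $\sum_i\min\bigl(\Vert\z_i\Vert_{V_i^{-1}}^2,1\bigr)$, or to strengthen the hypothesis to $\lambda_{\min}(V_{\tau+1})\ge 4r^2$ (equivalently, rescale the probing vectors by $2r$). Second, the main-phase trace contribution is $4r^2t$, not $t$, so the telescoping actually produces $\log\bigl(4r^2(\tau+t)/d\bigr)$ rather than $\log\bigl((4r^2\tau+t)/d\bigr)$. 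Both issues affect only constants inside the logarithm and are inherited from the $r=1$ normalization of the cited sources, so they do not threaten the $O(\cdot)$ claim of Theorem \ref{thm:main}, but they should be stated precisely in a self-contained proof.
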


\subsection{Critical Ratio and Rich History}\label{sec:rich_history}
In addition to the above lemmas, the proof of Theorem \ref{thm:main} crucially uses the following bound: $\normvt{\x_t - \x^*_t} \leq \beta \normvt{\x_t - \y_t}$. A smaller value of $\beta$ leads to a tighter regret bound. For this reason, we refer to the ratio $\normvt{\x_t - \x^*_t} / \normvt{\x_t - \y_t}$ as the \textit{critical ratio}. Observe that if we were to run \algname~in the concurrent CDB model, {\em i.e.,} $\y_t$ were to be chosen from $\X_t$, then $\beta = 1$ would suffice (a similar observation is used by \cite{saha2021optimal} in their proof of the regret bound). However, since $\y_t$ is chosen from $\H_t$, controlling the critical ratio is more challenging as it depends on the entire trajectory up to time $t$ via $V_t^{-1}$ and $\H_t$. Our main insight is that a suitable bound may be found by considering the worst-case scenario over $\x_t, \x^*_t$, and $V_t^{-1}$. The following definition captures this notion.

\begin{definition}[Rich history]\label{def:perfect}
$\mathcal{H}_t$, the history at time $t$, is said to be $\beta$-rich if, for any $\x, \x'$ such that $\norm{\x} \leq r$ and $\norm{\x'} \leq r$, and for any positive definite matrix $A$, the following bound holds:
\begin{align*}
    \normA{\x - \x'} \leq \beta  \max_{\y \in \H_t}\normA{\x - \y},
\end{align*}
\end{definition}
The following lemma shows that after a suitable number of steps of pure exploration, the history remains rich forever after (with high probability).

\begin{lemma}\label{lem:perfect}
   Let $\delta \in (0,1)$ be given. There exists a universal constant $C > 0$ such that if we run \algname~with
   \begin{equation*}
       \tau = \frac{Cr^2}{\lambda_{\min}(\Sigma)} \log(d/\delta),
   \end{equation*}
   then with probability at least $1 - \delta$, the history $\mathcal{H}_{t}$ at any time $t > \tau$ is $\frac{8r}{\sqrt{\lambda_{\min}(\Sigma)}}$-rich.
\end{lemma}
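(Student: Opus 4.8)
The plan is to reduce the richness condition, which must hold \emph{uniformly} over all $\x, \x'$ of norm at most $r$ and all positive definite $A$, to a single spectral event about the empirical covariance of the exploration probes. Fix any such $\x, \x'$ and $A$. For the numerator in Definition \ref{def:perfect} I would simply use $\normA{\x - \x'}^2 \le \lambda_{\max}(A)\norm{\x - \x'}^2 \le 4r^2 \lambda_{\max}(A)$. The crux is then a matching lower bound on $\max_{\y \in \H_t}\normA{\x - \y}$ that holds simultaneously for every $A$; this uniformity is exactly where the difficulty lies, and I intend to discharge it deterministically once one matrix inequality is in place, thereby avoiding any union bound over $A$ or over the points.

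To produce the lower bound I would exploit the structure of the exploration phase. Since $\y_t = \x_{t-1}$ there, the even-indexed probes $\z_2, \z_4, \ldots$ are i.i.d.\ with $\z_{2i} = \x_{2i} - \x_{2i-1}$, $\E[\z_{2i}\z_{2i}^\top] = \Sigma$, and $\norm{\z_{2i}} \le 2r$, and crucially \emph{both} endpoints of each pair lie in $\H_t$ for every $t > \tau$. The triangle inequality then gives, for each $i$, $\normA{\z_{2i}} \le \normA{\x - \x_{2i}} + \normA{\x - \x_{2i-1}} \le 2\max_{\y \in \H_t}\normA{\x - \y}$. Squaring, bounding the maximum by the average over the $m = \lfloor \tau/2 \rfloor$ pairs, and writing $\widehat\Sigma = \tfrac1m\sum_i \z_{2i}\z_{2i}^\top$ yields
\begin{equation*}
\max_{\y \in \H_t}\normA{\x - \y}^2 \ \ge\ \frac14 \max_i \normA{\z_{2i}}^2 \ \ge\ \frac14 \cdot \frac1m \sum_i \z_{2i}^\top A \z_{2i} \ =\ \frac14\,\mathrm{tr}(A\widehat\Sigma).
\end{equation*}

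The remaining step is the matrix concentration bound $\widehat\Sigma \succeq \tfrac14 \Sigma$ with probability at least $1-\delta$. Passing to the whitened summands $\Sigma^{-1/2}\z_{2i}\z_{2i}^\top\Sigma^{-1/2}$, which are i.i.d., PSD, have mean $I$, and operator norm at most $4r^2/\lambda_{\min}(\Sigma)$, a matrix Chernoff lower-tail bound forces $\lambda_{\min}\!\big(\tfrac1m\sum_i \Sigma^{-1/2}\z_{2i}\z_{2i}^\top\Sigma^{-1/2}\big) \ge \tfrac14$ once $m \gtrsim \tfrac{r^2}{\lambda_{\min}(\Sigma)}\log(d/\delta)$, i.e.\ for the prescribed $\tau = \tfrac{Cr^2}{\lambda_{\min}(\Sigma)}\log(d/\delta)$. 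On this event, $\mathrm{tr}(A\widehat\Sigma) \ge \tfrac14\mathrm{tr}(A\Sigma) \ge \tfrac14\lambda_{\min}(\Sigma)\,\mathrm{tr}(A) \ge \tfrac14\lambda_{\min}(\Sigma)\lambda_{\max}(A)$, using $\mathrm{tr}(A\Sigma)=\mathrm{tr}(A^{1/2}\Sigma A^{1/2})\ge \lambda_{\min}(\Sigma)\mathrm{tr}(A)$ and $\mathrm{tr}(A)\ge\lambda_{\max}(A)$. Combining with the numerator bound gives $\normA{\x - \x'}^2 \le \tfrac{64r^2}{\lambda_{\min}(\Sigma)}\max_{\y\in\H_t}\normA{\x-\y}^2$, i.e.\ $\beta = 8r/\sqrt{\lambda_{\min}(\Sigma)}$ exactly.

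Finally I would note that richness propagates forward for free: since $\H_t \supseteq \H_{\tau+1}$ for all $t>\tau$, the maximum $\max_{\y\in\H_t}\normA{\x-\y}$ only increases with $t$, so the single high-probability event at time $\tau+1$ certifies richness at every later round. I expect the main obstacle to be the uniformity over $A$ rather than the concentration itself, which is standard; the key idea that makes the argument go through is collapsing that uniform requirement onto the single Loewner inequality $\widehat\Sigma \succeq \tfrac14\Sigma$ via the trace manipulations above.
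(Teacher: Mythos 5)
Your proof is correct and follows essentially the same route as the paper's: upper-bound the numerator by $2r\sqrt{\lambda_{\max}(A)}$, use the triangle inequality on consecutive exploration pairs so that $\max_{\y\in\H_t}\normA{\x-\y}\ge\tfrac12\max_i\normA{\z_{2i}}$, replace the maximum by the average, and reduce the uniformity over $A$ to a single concentration event for the whitened empirical covariance of the i.i.d.\ probes, with the constants landing exactly on $\beta = 8r/\sqrt{\lambda_{\min}(\Sigma)}$ and the same monotonicity remark propagating richness to all $t>\tau$. The only (minor) variation is that you discharge the uniformity over $A$ via the Loewner bound $\widehat\Sigma\succeq\tfrac14\Sigma$ and the trace chain $\mathrm{tr}(A\widehat\Sigma)\ge\tfrac14\lambda_{\min}(\Sigma)\,\mathrm{tr}(A)\ge\tfrac14\lambda_{\min}(\Sigma)\lambda_{\max}(A)$, whereas the paper evaluates the quadratic form at the principal eigenvector $v_1(A)$ (Lemmas \ref{lem:good_vector} and \ref{lem:lwrbd_history}); both are instances of the same ``max $\ge$ average'' plus operator-norm concentration argument.
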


The main intuition behind this result is that all the `richness' of the history comes from the initial pure exploration rounds. Indeed, with sufficient exploration, we are likely to have items with features spanning `all directions'. Given this diversity of items in the history, for any $\x$ and $A$, it is possible to choose a $\y$ from the history that is (reasonably) well-aligned with the direction of the principal eigenvector of $A$ (corresponding to the largest eigenvalue). This gives a lower bound on the denominator of the term $\max_{\y \in \H_t}\normA{\x - \y}$. The remaining steps to get an upper bound on the critical ratio are not hard. The full derivation of this result is given in the appendix.

\subsection{Proof of Main Theorem}
The proof of Theorem \ref{thm:main} proceeds in a manner that is quite similar to the corresponding proofs in \cite{saha2021optimal} and \cite{li2017provably}. We begin by obtaining a bound on the instantaneous regret $r_t$, following which we bound the cumulative regret.

\begin{proof}[Proof of Theorem \ref{thm:main}]
    Throughout this proof, we assume that the inequalities of Lemmas \ref{lem:impo2}, \ref{lem:impo1}, and \ref{lem:perfect} hold. Lemmas \ref{lem:impo2} and \ref{lem:perfect} hold individually with probability $1-\delta$; in addition, the condition $\lambda_{\min}(V_{\tau+1}) \geq 1$ needed for Lemmas \ref{lem:impo2} and \ref{lem:impo1} holds with probability $1-\delta$. Thus, by the union bound, all three of them hold with probability $1-3\delta$.
    
    Recall that, for any $t$, $r_t = \util{\x^*_t} - \util{\x_t}$. For any $t > \tau$, we obtain:
    \begin{align*}
        r_t &= \util{\x^*_t} - \util{\x_t} = \utilml{\x^*_t - \x_t} + \langle \x^*_t - \x_t, \error \rangle \stackrel{(i)}{\leq} \langle \x^*_t - \x_t, \error \rangle \\
        &\stackrel{(ii)}{\leq} \Vert \error \Vert_{V_t} \normvt{\x^*_t - \x_t} \stackrel{(iii)}{\leq} \alpha \normvt{\x^*_t - \x_t} \stackrel{(iv)}{\leq} \alpha\beta \normvt{\x_t - \y_t} 
    \end{align*}
    where inequality $(i)$ holds because of the choice of $\x_t$ in \algname~ implies $\utilml{\x^*_t - \x_t} \leq 0$, $(ii)$ holds because of Cauchy-Schwartz inequality, $(iii)$ uses Lemma \ref{lem:impo2}, and $(iv)$ uses Lemma \ref{lem:perfect}. 

    Further, for $t \leq \tau$, the instantaneous regret can be bounded by using the Cauchy-Schwarz inequality, the triangle inequality, and the fact that $\norm{\x} \leq r$ and $\norm{\gt} = 1$.
    \begin{align*}
        r_t = \util{\x^*_t} - \util{\x_t} = \util{\x^*_t - \x_t} \leq \norm{\x^*_t - \x_t}\norm{\gt} \leq 2r
    \end{align*}
    Using these bounds on the instantaneous regret, we get
    \begin{align*} 
        R_T = \sum_{t=1}^\tau r_t +\sum_{t=\tau+1}^T r_t  &\stackrel{(i)}{\leq} 2r\tau + \alpha \beta \sum_{t=\tau+1}^T \normvt{\x_t - \y_t} \stackrel{(ii)}{\leq} 2r\tau+\alpha \beta \sqrt{2dT\log\left(\frac{4r^2\tau+T}{d}\right)},
    \end{align*}
    where (i) uses the bounds on $r_t$ derived above and (ii) follows from Lemma \ref{lem:impo1}.
    
    Finally, plugging in the values of $\alpha$, $\beta$ and $\tau$ from Lemmas \ref{lem:impo2} and \ref{lem:perfect}  respectively, we get:
    \begin{align*} 
    R_T & \leq \frac{cr^2}{\lambda_{\min}(\Sigma)} \log(d/\delta)+\frac{8r}{\kappa \sqrt{\lambda_{\min}(\Sigma)}} \sqrt{\frac{d}{2}\log \left( 1+\frac{2t}{d} \right)+\log \left(\frac{1}{\delta}\right)} \sqrt{2dT\log\left(\frac{4r^2\tau+T}{d}\right)}.
\end{align*}
Simplifying this expression yields $R_T = O\left((r/\kappa\sqrt{\lambda_{\min}(\Sigma)})d\sqrt{T}\log\left(T/d\delta\right)\right)$ as claimed in Theorem \ref{thm:main}.
\end{proof}

\section{Experimental Results}\label{sec:experimental_results}

In this section, we present experimental results illustrating the behaviour of our algorithm \algname~on synthetic data. The default parameters used in the default important parameters of the experiments are written below. The ambient dimension $d$ is set to be five. We generate both $\gt$ and the context vectors uniformly on the unit ball of radius one. Thus, $r = 1$ and $1/\sqrt{\lambda_{\min}(\Sigma)}$ is $\sqrt{d}$. We generate $k = 1000$ context vectors, sampled i.i.d. at each step. We simulate the user via a Plackett-Luce choice model; in other words, the link function $F(\cdot)$ is taken to be the sigmoid function $\sigma(x) = 1/(1 + \exp(-x))$. With this choice, we are able to use SciKitLearn's inbuilt method for Logistic Regression to calculate $\hat{\theta}_t$. The initial exploration period is set to be $10d$, and the total time horizon $T$ is set to $1000$. Note that these are default values; in the experiments that follow, we explore the performance of the algorithm by varying one parameter at a time. 

For each experiment, we plots three quantities: the cumulative regret $R_T$, the error of the MLE estimate, $\norm{\error}$, and the critical ratio ${\normvt{\x_t - \x^*_t}}/{\normvt{\x_t - \y_t}}$. While the regret is the main performance metric of bandit algorithms, the estimation error and critical ratio are key quantities that influence the regret of our algorithm. Each plot is averaged over $n = 100$ independent runs. The solid curves shown reflect the mean over $n$ runs and the shaded region shows a 95\% confidence interval on the estimate of the mean ($\pm2\sigma/\sqrt{n}$; $\sigma$ being the empirical standard deviation over the $n$ runs). In addition, the critical ratio plots are smoothened by a moving average window of length ten. All experiments were run on locally on a MacBook with the M1 Pro chip and 16GB RAM. For all experiments, a single run (over $T = 1000$ timesteps) took approximately one second.

Our first experiment, Figure~\ref{fig:dim_variation}, examines the variation of these three quantities with the underlying dimension $d$. We choose five different values of $d$: $\{2, 4, 6, 8, 10\}$. The rest of the parameters are chosen as per the default values; in particular, $\tau$ grows linearly with $d$. We plot the regret, error, and the critical ratio as a function of time in Figures \ref{fig:dim_subfig1}, \ref{fig:dim_subfig2}, and \ref{fig:dim_subfig3} respectively. As expected, the regret increases linearly in the initial pure exploration phase, followed by a sublinear increase in the main explore-exploit phase. We observe that the regret increases with the dimension, roughly linearly. The error in the estimate, $\norm{\mle - \gt}$, decreases gracefully over time, but increases with $d$. Interestingly, we observe that the critical ratio is much smaller in practice than our theoretical bound. However, as we predicted by theory, the ratio increases with the dimension $d$. It also decreases slightly over time.
\begin{figure}[htbp]
  \centering
  \begin{subfigure}[b]{0.32\textwidth}
    \centering
    \includegraphics[width=\linewidth]{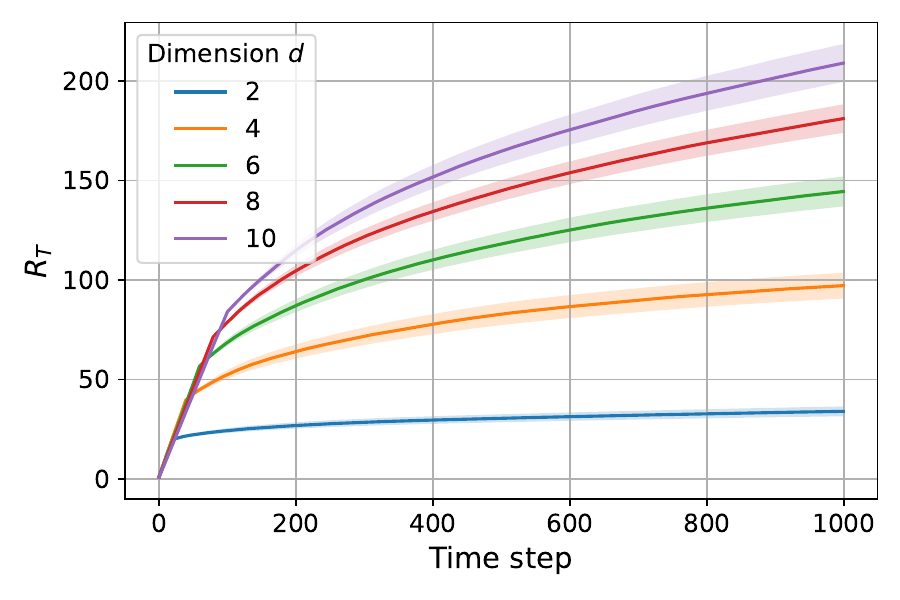}
    \caption{Cumulative Regret}
    \label{fig:dim_subfig1}
  \end{subfigure}
  \hfill
  \begin{subfigure}[b]{0.32\textwidth}
    \centering
    \includegraphics[width=\linewidth]{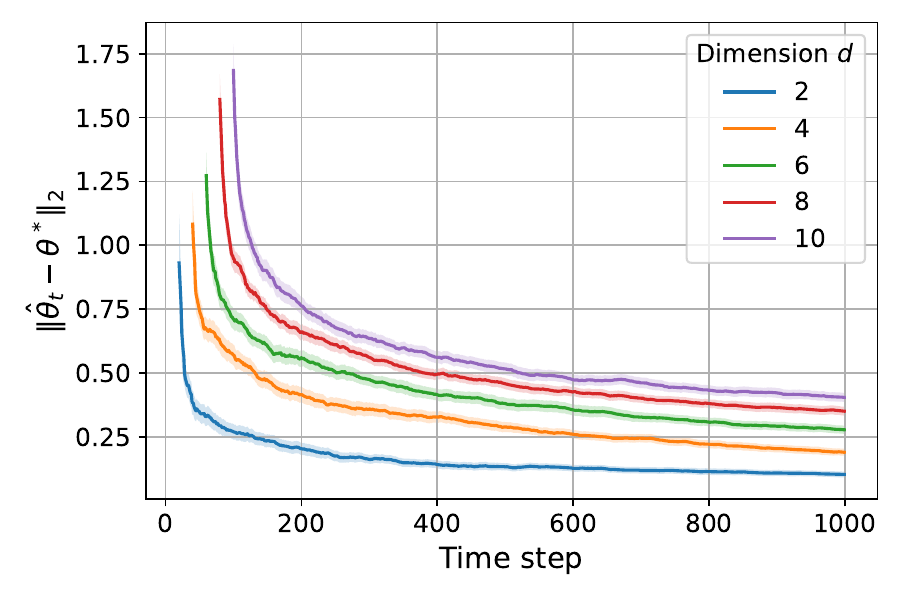}
    \caption{Error in ML estimate}
    \label{fig:dim_subfig2}
  \end{subfigure}
  \hfill
  \begin{subfigure}[b]{0.32\textwidth}
    \centering
    \includegraphics[width=\linewidth]{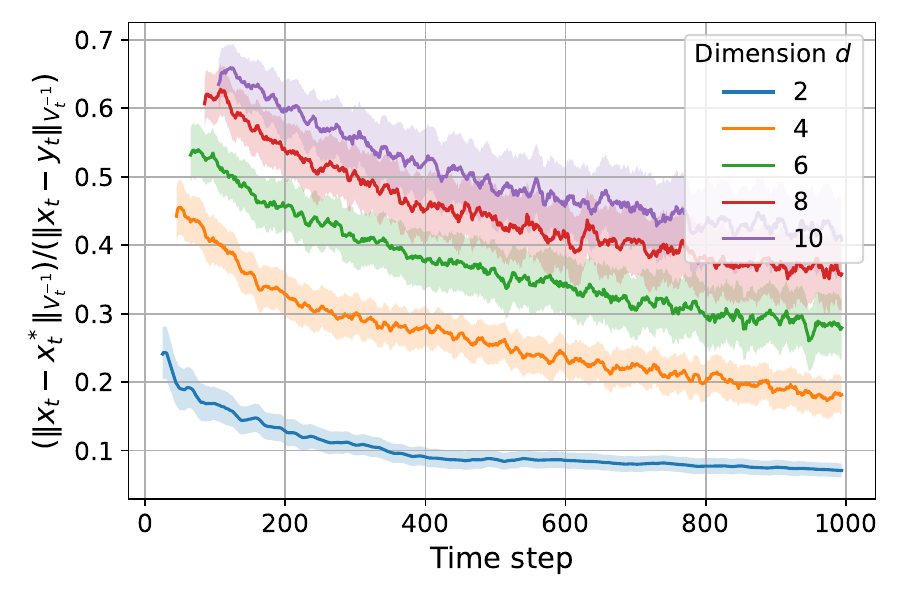}
    \caption{Critical Ratio}
    \label{fig:dim_subfig3}
  \end{subfigure}
  \caption{Variation of regret, error, and the key ratio as a function of dimension $d$.}
  \label{fig:dim_variation}
\end{figure}
\vspace{-.2cm}

Our next plot, Figure \ref{fig:tau_variation} examines the role of the initial pure exploration phase. We vary $\tau$ among the values $\{0, 25, 50, 75, 100\}$, keeping all other parameters to their default values listed above. In the case where $\tau = 0$, we add a small regularizer parameter of $\lambda = 0.1$ while calculating the maximum likelihood estimator. (We experimented with different values of $\lambda$ and found this to be the best). We observe in Figure \ref{fig:tau_subfig1} that as $\tau$ increases from 25 to 100, the regret progressively worsens. This is because the extra initial exploration affects neither the error in $\mle$, nor the critical ratio. Thus, the extra exploration penalty is not compensated for (see Figures \ref{fig:tau_subfig2}, \ref{fig:tau_subfig3}). However, when $\tau = 0$, the regret is large. This is because both the error and the critical ratio are significantly larger for this case. Thus, the experiment concludes that a small amount of initial exploration is optimal for $\algname$.
\begin{figure}[htbp]
  \centering
  \begin{subfigure}[b]{0.32\textwidth}
    \centering
    \includegraphics[width=\linewidth]{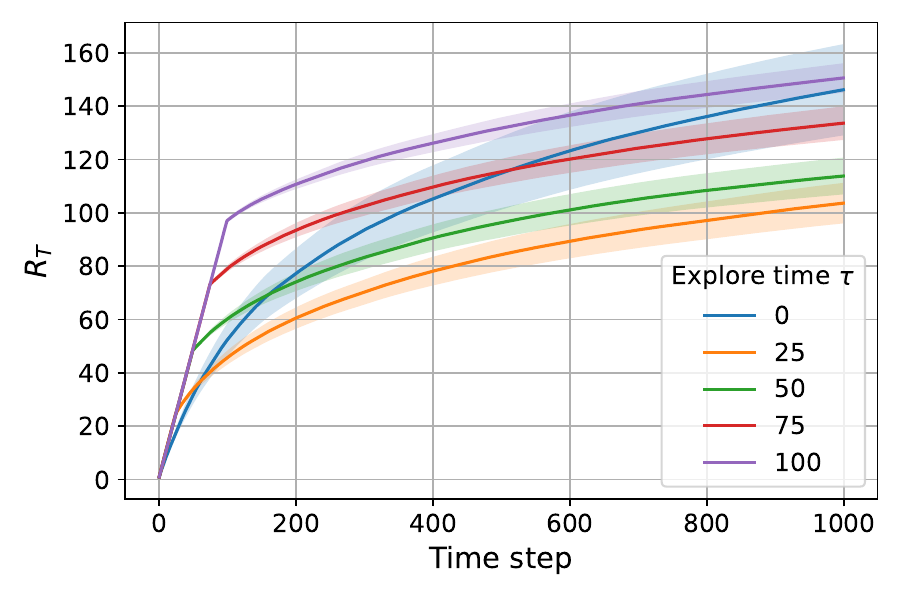}
    \caption{Cumulative Regret $R_T$}
    \label{fig:tau_subfig1}
  \end{subfigure}
  \hfill
  \begin{subfigure}[b]{0.32\textwidth}
    \centering
    \includegraphics[width=\linewidth]{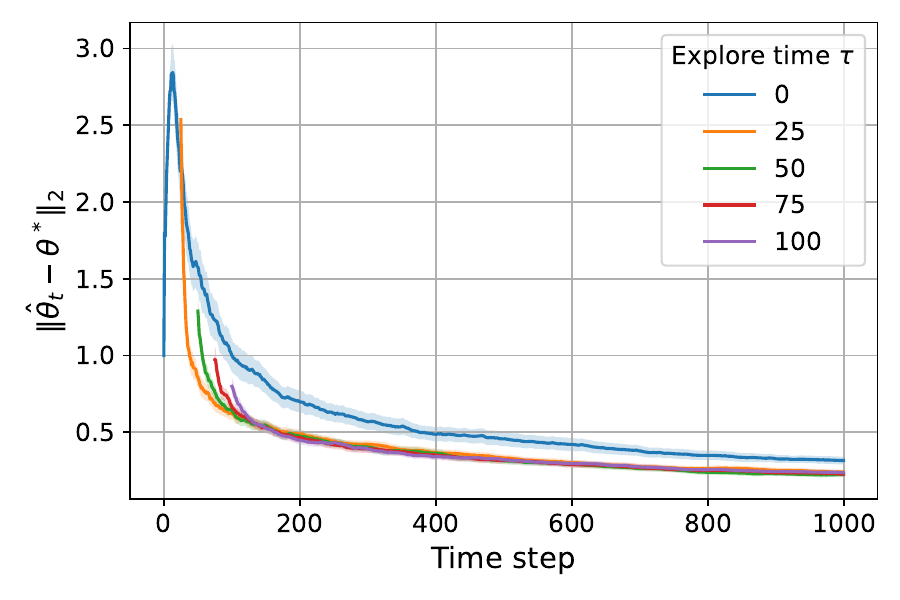}
    \caption{$\norm{\mle - \gt}$}
    \label{fig:tau_subfig2}
  \end{subfigure}
  \hfill
  \begin{subfigure}[b]{0.32\textwidth}
    \centering
    \includegraphics[width=\linewidth]{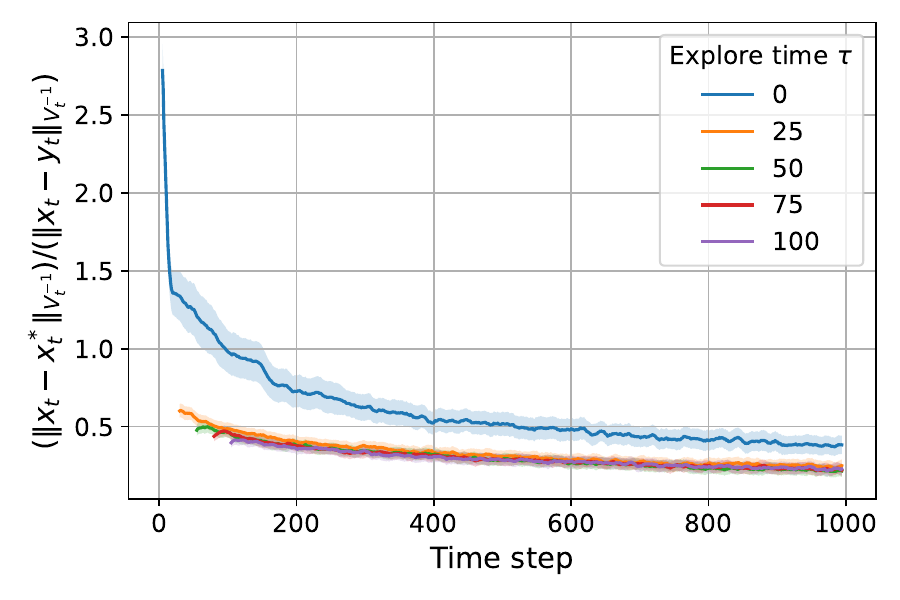}
    \caption{${\normvt{\x_t - \x^*_t}}/{\normvt{\x_t - \y_t}}$}
    \label{fig:tau_subfig3}
  \end{subfigure}
  \caption{Variation of regret, error, and the key ratio as a function of exploration time $\tau$.}
  \label{fig:tau_variation}
\end{figure}

Our final experiment (Figure \ref{fig:comparison}) compares \algname, an algorithm designed for the history-constrained CDB model, with \colstim, the state-of-the-art algorithm for concurrent CDBs \cite{bengs2022stochastic}. All parameters are kept at the default values listed above. For \colstim, we experiment with a few values of the hyper-parameters. $c_1$ controls the exploration-exploitation tradeoff in the choice of $\y$, and $c_2$ (called $C_{\text{thresh}}$ in \cite{bengs2022stochastic}) controls a similar tradeoff in the choice of $\x$. We vary these hyperparameters, choosing $c_1 \in \{1, 10\}$ and $c_2 \in \{0.1, 1\}$. Recall that for \algname, there is no such hyper-parameter. We plot the regret in terms of $\x$ alone for \algname~ and the average regret in terms of $\x$ and $\y$ for \colstim. This is indeed a fair comparison, for one counts a unit of regret for every item recommended and for every comparison made. Figure \ref{fig:comparison_subfig1} shows that \algname~outperforms \colstim~over all values of the hyperparameters. The performance can be better understood through Figures \ref{fig:comparison_subfig2} and \ref{fig:comparison_subfig3}. On the one hand, for $c_1 = 1$ (low exploration), we observe that \colstim does not learn $\gt$ quickly, leading to large regret. On the other hand, for $c_1 = 10$ (large exploration), the exploratory nature of $\y$ leads to a large regret, despite $\mle_t$ converging quickly to $\gt$. A deeper understanding can be obtained by casting the concurrent model in the history-constrained setting. Assume the context set remains unchanged for two successive time steps (say $t$ and $t+1/2$). \colstim~ first recommends $\x_t$ and then $\y_t$ (paying regret for both), and then compares the latest item ($\y_t$) to \textit{the one consumed just before} ($\x_t$). \algname, on the other hand, recommends $\x_t$ at both successive time steps (paying twice the regret), but compares the latest item to \textit{any suitable item in the past}. This experiment clearly demonstrates the benefit of recycling items from the user's consumption history.

\begin{figure}[htbp]
  \centering
  \begin{subfigure}[b]{0.32\textwidth}
    \centering
    \includegraphics[width=\linewidth]{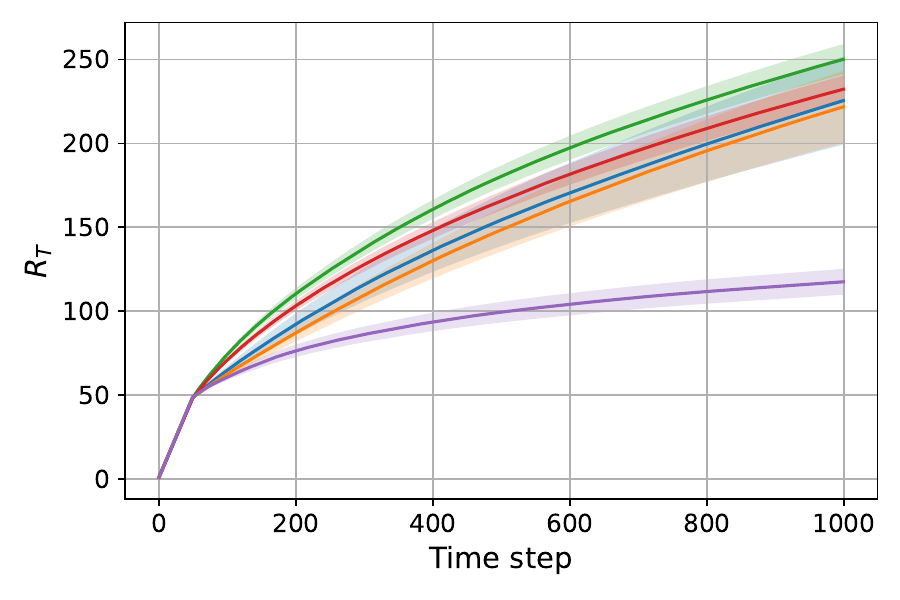}
    \caption{Cumulative Regret $R_T$}
    \label{fig:comparison_subfig1}
  \end{subfigure}
  \hfill
  \begin{subfigure}[b]{0.32\textwidth}
    \centering
    \includegraphics[width=\linewidth]{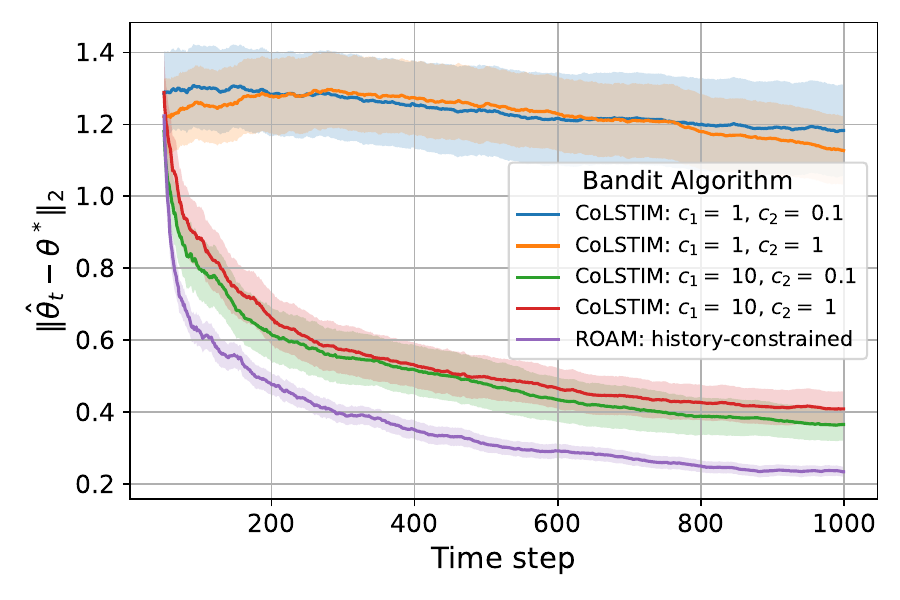}
    \caption{$\norm{\mle - \gt}$}
    \label{fig:comparison_subfig2}
  \end{subfigure}
  \hfill
  \begin{subfigure}[b]{0.32\textwidth}
    \centering
    \includegraphics[width=\linewidth]{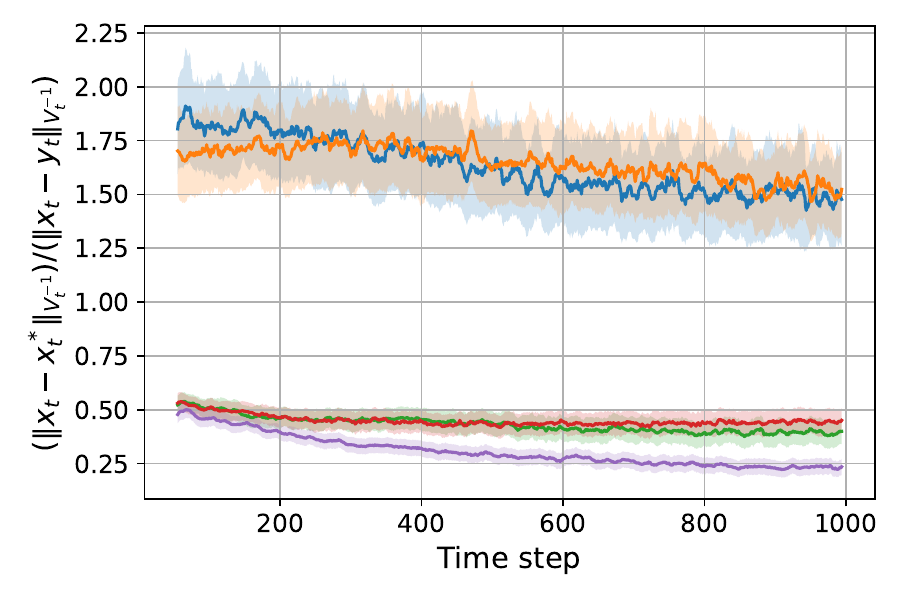}
    \caption{${\normvt{\x_t - \x^*_t}}/{\normvt{\x_t - \y_t}}$}
    \label{fig:comparison_subfig3}
  \end{subfigure}
  \caption{A comparison of \algname~ with \colstim, keeping all parameters identical.}
  \label{fig:comparison}
\end{figure}

\section{Discussion}\label{sec:discussion}

\paragraph{Summary} In this work, we propose a contextual bandit framework that learns from explicit comparisons between consumed items. The key idea is to reuse previously recommended items—incurring no new regret—for future comparisons. Our algorithm, \algname, exploits this flexibility to accumulate a rich history within a short exploration time. This rich history enables informative queries subsequently, yielding $O(\sqrt{T})$ regret. Our simulations show that \algname~outperforms the state-of-the-art in concurrent CDBs. Our work opens up several avenues of future research, which we discuss below.

\paragraph{Model Interpretation} We assume that $\gt$ encodes a single user's fixed preferences, and item features are known and time-invariant. These assumptions ensure a consistent notion of a user's history, useful for selecting past items for comparison. In alternative contextual bandit interpretations, item features may vary over time ({\em e.g.}, by incorporating user metadata), and $\gt$ may consequently reflect global feature weights. While our analysis depends on the particular viewpoint we adopt, the broader insight, that comparing with items consumed in the past is valuable, is likely to hold more generally. Formalizing this intuition in other contextual bandit models remains an open direction.

\paragraph{Handling Nonlinearity} Like prior work on CDBs \cite{saha2021optimal, bengs2022stochastic}, our  regret bounds scale with $1/\kappa$, which can grow exponentially with $r$. This is a result of the analysis technique, which uses a uniform lower bound on $\link'(\cdot)$ (see Lemma \ref{lem:impo2}). Recent work on generalized linear bandits avoids this dependence on $\kappa$ by more refined handling of the nonlinearity \cite{dong2019performance, faury2020improved, abeille2021instance}. Extending such techniques to the CDB setting could improve regret bounds, but remains an open challenge.

\paragraph{Comparison Horizon} Our algorithm selects comparison items from the entire history, often reusing those from the exploration phase. In practice, it may be preferable to compare with recently consumed items. A caveat is that once the user's preferences are reasonably estimated, such items are likely to be similar. In terms of the richness of the history, this could be severely detrimental (see Section \ref{sec:rich_history}). However, comparisons among similar queries could actually prove useful to resolve fine-grained rankings among items. Alternate algorithms, potentially guided by criteria like the information ratio \cite{dong2019performance}, may prove to be effective in this scenario.

\paragraph{Choice Model} Like prior work, we assume that choice probabilities depend only on utility differences. This idealization may fail in practice: some item pairs may not be comparable, noise may be temporally correlated, and preferences may be shaped by context or prior experiences (e.g., anchoring) \cite{tversky1972elimination, tversky1974judgment}. Capturing such effects requires richer choice models. Developing practical bandit algorithms under more realistic user behaviour remains an important direction for future work.

\paragraph{Holistic Cost} In our model, we treat item consumption as costly (incurring regret) and comparisons as essentially free. However, taken to an extreme, this would suggest querying repeatedly with every new item recommended, which is impractical. A more realistic model would assign costs to queries and optimize the total cost: regret plus query burden.


\bibliographystyle{unsrt}
\bibliography{NeurIPS_2025/clean_references}

\newpage
\appendix
\section{Remaining Proofs}

\subsection{Basics of Symmetric Matrices}\label{sec:basics_symmatrix}

Let $\pd$ denote the set of positive definite matrices of dimension $d$. Any $A \in \pd$ can be expressed in terms of its eigenvalues and eigenvectors as follows:
\begin{equation}
    A = \sum_{i = 1}^d \lambda_i(A) v_i(A)v_i(A)^\top,
\end{equation}
where $\lambda_1(A) \geq \lambda_2(A) \ldots \geq \lambda_d(A) \geq 0$ are the eigenvalues of $A$. $v_1(A), \ldots, v_d(A)$ are the corresponding eigenvectors, forming an orthonormal basis of $\R^d$. In particular, $\norm{v_i(A)} = 1 \ \forall \ i, \ \forall \ A$. When the matrix $A$ is clear from context, we use the simpler notation $\lambda_i$ and $v_i$.

For any $A \in \pd$, the matrix-induced norm $\Vert \cdot \Vert_A$ is defined as:
\begin{equation}
    \normA{z} = \sqrt{z^\top Az} \ \forall \ z, \ \forall \ A
\end{equation}
By the eigen-decomposition of the matrix, it follows that
\begin{equation}\label{eq:lowerbound_by_maxeig}
    \normA{z}^2 \geq \lambda_1(A) \langle z, v_1(A) \rangle^2 \ \forall \ z, \ \forall \ A
\end{equation}
We can also derive the following upper bound:
\begin{equation}\label{eq:upperbound_by_maxeig}
    \normA{z}^2 \leq \lambda_1(A) \norm{z}^2 \ \forall \ z, \ \forall \ A
\end{equation}

For any matrix $A$, let $\norm{A}$ denote the induced $\ell_2$ norm of the matrix. If $A \in \pd$, we have the identity
\begin{equation}\label{eq:def_symmatrix_norm}
    \norm{A} = \sup_{v: \norm{v} = 1} v^\top A v = \lambda_1(A)
\end{equation}

Let $\Sigma$ be an arbitrary positive definite matrix and $z$ an arbitrary vector (both in $d$ dimensions). Let $y = \Sigma^{-1/2}z$ and $B = \Sigma^{1/2} A\Sigma^{1/2}$. Then $\normA{z} = \Vert y \Vert_B$. Further, 
\begin{equation}\label{eq:whitening_transformation}
    \lambda_1(B) = \lambda_1(\Sigma^{1/2} A\Sigma^{1/2}) = \sup_{v: \norm{v} = 1} v^\top \Sigma^{1/2} A\Sigma^{1/2} v \geq \left(\lambda_{\min}(\Sigma^{1/2})\right)^2 \sup_{v: \norm{v} = 1} v^\top A v = \lambda_1(A) \lambda_{\min}(\Sigma)
\end{equation}

\subsection{Basic Concentration Results}
\begin{definition}[Isotropic Distribution]
    A random vector $z \in \R^d$ is said to satisfy an isotropic distribution if $\E[zz^\top] = I_d$; here, $I_d$ refers to the identity matrix in $d$ dimensions.
\end{definition}

\begin{lemma}[Key Concentration Inequality]\label{lem:concentration}
    Let $z_1, z_2, \ldots$ be an i.i.d. sequence of random vectors from an isotropic distribution, satisfying the bound $\norm{z} \leq r$ almost surely. Let $\epsilon \in (0,1)$ and $\delta \in (0,1)$ be given. Suppose $\tau \geq c(r^2/\epsilon^2)\log(d/\delta)$, where $c > 0$ is a universal constant. Then, with probability at least $1-\delta$,
    \begin{equation*}
        \normbig{\frac{1}{\tau} \sum_{t = 1}^\tau z_tz_t^\top - I_d} \leq \epsilon
    \end{equation*}
\end{lemma}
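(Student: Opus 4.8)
The plan is to recognize this as a standard matrix concentration statement and to prove it via the \emph{Matrix Bernstein inequality}. Define the i.i.d., symmetric, mean-zero matrices $X_t := z_t z_t^\top - I_d$ for $t \in [\tau]$; note $\E[X_t] = 0$ precisely because the distribution is isotropic. Writing $S_\tau := \sum_{t=1}^\tau X_t$, the claim $\normbig{\tfrac{1}{\tau}\sum_t z_tz_t^\top - I_d} \le \epsilon$ is equivalent to $\norm{S_\tau} \le \epsilon\tau$, so it suffices to obtain a tail bound on $\norm{S_\tau}$ and then calibrate $\tau$. A useful preliminary observation, used repeatedly below, is that $d = \mathrm{tr}(I_d) = \E[\norm{z}^2] \le r^2$, so in particular $r^2 \ge 1$.

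Matrix Bernstein requires two quantities: a uniform almost-sure bound $L$ on $\norm{X_t}$, and a bound $\nu$ on the matrix variance $\normbig{\sum_t \E[X_t^2]}$. For the first, since $\norm{z_tz_t^\top} = \norm{z_t}^2 \le r^2$ and $\norm{I_d} = 1$, the triangle inequality gives $\norm{X_t} \le r^2 + 1 \le 2r^2 =: L$. For the second, I would expand $\E[X_t^2] = \E[(z z^\top)^2] - 2\E[zz^\top] + I_d = \E[\norm{z}^2\, zz^\top] - I_d$, using $(zz^\top)^2 = \norm{z}^2 zz^\top$ together with isotropy $\E[zz^\top] = I_d$. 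Because $\norm{z}^2 \le r^2$ almost surely, $\E[\norm{z}^2 zz^\top] \preceq r^2\,\E[zz^\top] = r^2 I_d$, whence $\E[X_t^2] \preceq (r^2 - 1) I_d \preceq r^2 I_d$. Summing the i.i.d. terms yields $\normbig{\sum_t \E[X_t^2]} \le \tau r^2 =: \nu$.

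With these in hand, Matrix Bernstein gives, for every $s \ge 0$,
$$\P(\norm{S_\tau} \ge s) \le 2d\exp\!\left(\frac{-s^2/2}{\nu + Ls/3}\right).$$
Setting $s = \epsilon\tau$ and using $\nu = \tau r^2$, $L = 2r^2$, and $\epsilon < 1$, the denominator is at most $\tau r^2(1 + 2\epsilon/3) \le 2\tau r^2$, so the exponent is at most $-\epsilon^2\tau/(4r^2)$; thus $\P(\norm{S_\tau} \ge \epsilon\tau) \le 2d\exp(-\epsilon^2\tau/(4r^2))$. Requiring this to be at most $\delta$ gives $\tau \ge (4r^2/\epsilon^2)\log(2d/\delta)$, which is implied by the hypothesis $\tau \ge c(r^2/\epsilon^2)\log(d/\delta)$ for a suitable universal constant $c$ (absorbing the $\log 2$ factor using $d \ge 1$). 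I do not expect a genuine obstacle here; the one place to be careful is the variance computation — correctly identifying $\E[X_t^2] = \E[\norm{z}^2 zz^\top] - I_d$ and using the almost-sure norm bound to dominate it by $r^2 I_d$ — since a loose bound there would propagate into the constant and the required $\tau$. An alternative route, should one prefer to avoid computing $\E[X_t^2]$, is the Matrix Chernoff bound applied to the PSD summands $z_tz_t^\top$ (whose expectation sum is exactly $\tau I_d$), which directly controls $\lambda_{\max}$ and $\lambda_{\min}$ of $\sum_t z_tz_t^\top$ and yields the same two-sided conclusion.
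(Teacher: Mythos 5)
Your proof is correct, but it takes a different route from the paper's. The paper proves this lemma by directly invoking Theorem 5.41 of Vershynin's notes (the bound $\normbig{\frac{1}{\tau}\sum_t z_tz_t^\top - I_d} \leq \max(\varepsilon,\varepsilon^2)$ with $\varepsilon = sr/\sqrt{\tau}$, holding with probability $1-2d\exp(-c's^2)$ for rows with bounded norm), and then simply calibrates $s$ and $\tau$ against $\epsilon$ and $\delta$; no variance computation is needed because that theorem packages it away. You instead center the summands as $X_t = z_tz_t^\top - I_d$ and apply Matrix Bernstein, which requires you to supply the two inputs yourself: the almost-sure bound $\norm{X_t} \leq r^2+1 \leq 2r^2$ (your observation that $d = \E[\norm{z}^2] \leq r^2$, hence $r^2 \geq 1$, is the right way to justify this) and the variance proxy via $\E[X_t^2] = \E[\norm{z}^2 zz^\top] - I_d \preceq r^2 I_d$, both of which you compute correctly; the resulting tail bound and the calibration $\tau \geq (4r^2/\epsilon^2)\log(2d/\delta)$ match the paper's scaling. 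What each approach buys: the paper's citation is shorter and leans on a result tailored to exactly this "sample covariance of isotropic bounded rows" situation, while yours is more self-contained and transparent about where the $r^2/\epsilon^2$ rate comes from, at the cost of the explicit moment computation. The only soft spot is the final absorption of $\log(2d/\delta)$ into $c\log(d/\delta)$, which is not a uniform-constant step when $\delta$ is close to $1$ and $d=1$; this is a cosmetic issue shared by the paper's own calibration and does not affect the result in the regime of interest.
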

\begin{proof}
    This lemma follows in a straightforward fashion from a matrix concentration result proven in \cite{vershynin2012introduction}, namely Theorem 5.41. We reproduce this result here, in our notation. This result states that there exists an absolute constant $c' > 0$ such that the following statement holds. For any $s > 0$, with probability at least $1 - 2d\exp(-c's^2)$, 
    \begin{equation*}
        \normbig{\frac{1}{\tau} \sum_{t = 1}^\tau z_tz_t^\top - I_d} \leq  \max(\varepsilon, \varepsilon^2)  =: \epsilon \ \text{where } \varepsilon = s \frac{r}{\sqrt{\tau}}.
    \end{equation*}
    Since we have chosen $\epsilon$ to be less than one, $\max(\varepsilon, \varepsilon^2)$ equals $\varepsilon$. It remains to deduce a value of $\tau$ as a function of $\epsilon$ and $\delta$.
    
    Setting $2d\exp(-c's^2)$ to $\delta$ gives us $s = O(\sqrt{\log(d/\delta)})$. Setting $sr/\sqrt{\tau}$ to $\epsilon$ gives us $\tau = O(r^2/\epsilon^2) \log(d/\delta)$. The constant hidden in the $O(\cdot)$ notation is a universal constant.
\end{proof}

\begin{lemma}[Existence of a Good Vector]\label{lem:good_vector}
    Let $z_1, z_2, \ldots$ be an i.i.d. sequence of random vectors from an isotropic distribution, satisfying the bound $\norm{z} \leq r$ almost surely. Let $\epsilon \in (0,1)$ and $\delta \in (0,1)$ be given. There exists a universal constant $c > 0$ such that if $\tau \geq c(r^2/\epsilon^2)\log(d/\delta)$, then with probability at least $1-\delta$,
    \begin{equation*}
        \inf_{v: \norm{v} = 1} \ \max_{t \in [\tau]} \langle z_t, v \rangle^2 \geq 1 - \epsilon 
    \end{equation*}
\end{lemma}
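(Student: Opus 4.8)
The plan is to reduce the uniform lower bound on $\max_{t \in [\tau]} \langle z_t, v \rangle^2$ to the operator-norm concentration already established in Lemma \ref{lem:concentration}, thereby sidestepping any explicit covering argument over the unit sphere. The starting observation is that, for any fixed unit vector $v$, the maximum of a finite collection of nonnegative numbers dominates their average:
\begin{equation*}
    \max_{t \in [\tau]} \langle z_t, v \rangle^2 \;\geq\; \frac{1}{\tau}\sum_{t=1}^\tau \langle z_t, v \rangle^2 \;=\; v^\top \left(\frac{1}{\tau}\sum_{t=1}^\tau z_t z_t^\top\right) v.
\end{equation*}
Writing $M = \frac{1}{\tau}\sum_{t=1}^\tau z_t z_t^\top$ for the empirical second-moment matrix, this recasts the quantity of interest as the quadratic form $v^\top M v$.

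Next I would invoke Lemma \ref{lem:concentration}, whose hypotheses hold verbatim since the $z_t$ are i.i.d. isotropic with $\norm{z} \leq r$ almost surely. With $\tau \geq c(r^2/\epsilon^2)\log(d/\delta)$, it guarantees that with probability at least $1-\delta$ we have $\norm{M - I_d} \leq \epsilon$. On this high-probability event, for \emph{every} unit vector $v$ simultaneously,
\begin{equation*}
    v^\top M v \;=\; v^\top I_d v + v^\top (M - I_d) v \;\geq\; 1 - \norm{M - I_d}\,\norm{v}^2 \;\geq\; 1 - \epsilon,
\end{equation*}
using $\norm{v} = 1$ and the definition \eqref{eq:def_symmatrix_norm} of the operator norm. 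Combining the two displays and taking the infimum over unit $v$ yields $\inf_{v: \norm{v}=1} \max_{t \in [\tau]} \langle z_t, v \rangle^2 \geq 1 - \epsilon$, as claimed.

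The step I would flag as doing the real work is the max-dominates-average inequality in the first display: it is what converts a pointwise statement about the individual samples $z_t$ into a single quadratic form in $M$, so that the \emph{uniform} operator-norm bound of Lemma \ref{lem:concentration} controls all directions $v$ at once. The naive alternative would fix a direction $v$, apply a scalar concentration bound to $\max_t \langle z_t, v \rangle^2$, and then union-bound over an $\epsilon$-net of the sphere; the present route is cleaner and loses nothing in the constants. I do not anticipate any genuine obstacle beyond checking that the hypotheses of Lemma \ref{lem:concentration} transfer directly, which they do, so the argument is essentially immediate once the averaging trick is in place.
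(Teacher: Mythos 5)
Your argument is correct and is essentially the same as the paper's: both proofs invoke Lemma \ref{lem:concentration} to get the uniform bound $v^\top M v \geq 1-\epsilon$ over unit vectors and then use the fact that the maximum of $\langle z_t, v\rangle^2$ over $t$ dominates their average. The only difference is the order in which the two steps are presented, which is immaterial.
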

\begin{proof}
    This result follows easily from Lemma \ref{lem:concentration}. We outline the steps below. Suppose the concentration result holds (which happens with probability $1-\delta$). Then
    \begin{align*}
        \normbig{\frac{1}{\tau} \sum_{t = 1}^\tau z_tz_t^\top - I_d} 
        &\leq \epsilon \\
        \Rightarrow \forall \ v: \norm{v} = 1, \ 
        \left\vert v^\top \left(\frac{1}{\tau} \sum_{t = 1}^\tau z_tz_t^\top - I_d\right) v \right\vert 
        &\leq \epsilon \quad (\text{by } \eqref{eq:def_symmatrix_norm}) \\
        \Rightarrow \inf_{v: \norm{v} = 1} \ 
        \left\vert\frac{1}{\tau} \sum_{t = 1}^\tau \langle v, z_t \rangle^2 - \norm{v}^2\right\vert 
        &\leq \epsilon  \\
        \Rightarrow \inf_{v: \norm{v} = 1}  \ 
        \frac{1}{\tau} \sum_{t = 1}^\tau \langle v, z_t \rangle^2  
        &\geq 1-\epsilon \quad  (\because \norm{v} = 1) \\
        \Rightarrow \inf_{v: \norm{v} = 1}  \ 
        \max_{t \in [\tau]} \langle v, z_t \rangle^2  
        &\geq 1-\epsilon  \quad (\because \text{maximum is larger than average}) 
    \end{align*}
\end{proof}

\begin{lemma}[Lower Bound on Matrix Norm]\label{lem:lwrbd_matrixnorm}
    Let $z_1, z_2, \ldots$ be an i.i.d. sequence of random vectors from an isotropic distribution, satisfying the bound $\norm{z} \leq r$ almost surely. Let $\epsilon \in (0,1)$ and $\delta \in (0,1)$ be given. Suppose $\tau \geq c(r^2/\epsilon^2)\log(d/\delta)$, where $c$ is some universal constant. Then, with probability at least $1-\delta$,
    \begin{equation*}
        \forall \ {A \in \pd},  \ \max_{t \in [\tau]} \Vert z_t \Vert_A^2 \geq (1 - \epsilon) \lambda_{1}(A)
    \end{equation*}
\end{lemma}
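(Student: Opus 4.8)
The plan is to reduce the uniform-over-$A$ statement to the uniform-over-unit-vectors bound already established in Lemma \ref{lem:good_vector}. The key observation is that for any positive definite $A$, its principal eigenvector $v_1(A)$ is a unit vector, so any lower bound holding for \emph{all} unit directions simultaneously automatically transfers to $A$ through its principal eigenvector. Thus the uniformity over matrices, which looks like the hard part, will in fact come for free.

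First I would invoke Lemma \ref{lem:good_vector} with the given $\epsilon, \delta$ and the stated choice of $\tau$. This produces a single high-probability event $\mathcal{E}$, occurring with probability at least $1-\delta$, on which $\inf_{v:\norm{v}=1} \max_{t\in[\tau]} \langle z_t, v \rangle^2 \geq 1 - \epsilon$. Crucially, $\mathcal{E}$ does not reference $A$ at all; all the uniformity over matrices will be extracted from the infimum over unit vectors $v$.

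Next, working on $\mathcal{E}$, I would fix an arbitrary $A \in \pd$ and write $v_1 = v_1(A)$ for its principal eigenvector, which satisfies $\norm{v_1} = 1$. Applying the eigen-decomposition bound \eqref{eq:lowerbound_by_maxeig} to each sample gives $\normA{z_t}^2 \geq \lambda_1(A)\langle z_t, v_1 \rangle^2$ for every $t$. Taking the maximum over $t \in [\tau]$ and pulling out the nonnegative factor $\lambda_1(A)$ yields $\max_{t\in[\tau]}\normA{z_t}^2 \geq \lambda_1(A)\max_{t\in[\tau]}\langle z_t, v_1 \rangle^2$. Since $v_1$ is a particular unit vector, the event $\mathcal{E}$ guarantees $\max_{t\in[\tau]}\langle z_t, v_1 \rangle^2 \geq 1-\epsilon$, and chaining the two inequalities gives $\max_{t\in[\tau]}\normA{z_t}^2 \geq (1-\epsilon)\lambda_1(A)$. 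Because $A$ was arbitrary and $\mathcal{E}$ is independent of it, this holds for every $A \in \pd$ on the same event.

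There is no substantive obstacle: the entire difficulty has been front-loaded into Lemma \ref{lem:good_vector}, whose proof handles the uniform control over unit directions via the matrix concentration of Lemma \ref{lem:concentration}. The only point requiring care is the order of quantifiers — one must extract the single event $\mathcal{E}$ first and only afterwards fix $A$, so that the probability budget $1-\delta$ is not inadvertently spent afresh for each matrix. Once this is noted, the passage from ``all unit vectors'' to ``all positive definite matrices'' is immediate through the principal eigenvector.
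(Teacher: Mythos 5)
Your proof is correct and follows essentially the same route as the paper's: both reduce the claim to Lemma \ref{lem:good_vector} by specializing the uniform-over-unit-vectors bound to the principal eigenvector $v_1(A)$ and combining it with \eqref{eq:lowerbound_by_maxeig}. Your explicit remark about fixing the high-probability event before quantifying over $A$ is a welcome clarification of a point the paper leaves implicit, but it is not a different argument.
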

\begin{proof}
    This result follows easily from Lemma \ref{lem:good_vector} and the basic inequalities concerning positive semidefinite matrices presented in Section \ref{sec:basics_symmatrix}. Starting from \eqref{eq:lowerbound_by_maxeig}, we get:
    \begin{align*}
        \forall \ A \in \pd, \ \forall \ z \in \R^d, \Vert z \Vert_A^2 
        &\geq \lambda_1(A) \langle z, v_1(A) \rangle^2 
        \  \\
        \Rightarrow \forall \ {A \in \pd}, \forall \  {t \in [\tau]}, \Vert z_t \Vert_A^2 
        &\geq \lambda_1(A) \langle z_t, v_1(A) \rangle^2 
    \end{align*}
    By definition, $\norm{v_1(A)} = 1$ for all $A \in \pd$. Therefore,
    By Lemma \ref{lem:good_vector}, with probability at least $1 - \delta$,
    \begin{align*}
        \forall \ {A \in \pd}, \max_{t \in [\tau]} \langle v_1(A), z_t \rangle^2  
        &\geq 1-\epsilon
    \end{align*}
    Putting these two inequalities together, we get that with probability at least $1-\delta$,
    \begin{align*}
        \forall \ {A \in \pd}, \ \max_{t \in [\tau]} \Vert z_t \Vert_A^2 
        &\geq \max_{t \in [\tau]} \lambda_1(A) \langle z_t, v_1(A) \rangle^2 \\
        &\geq (1-\epsilon)\lambda_1(A).
    \end{align*}
\end{proof}

The next lemma extends this result to the case of nonisotropic random vectors.

\begin{lemma}[Lower Bound  for Nonisotropic Vectors]\label{lem:lwrbd_history}
    Let $z_1, z_2, \ldots$ be an i.i.d. sequence of random vectors, with a distribution such that $\E[zz^\top] = \Sigma$ is invertible and the bound $\norm{z} \leq r$ holds almost surely. Let $\epsilon \in (0,1)$ and $\delta \in (0,1)$ be given. Suppose $\tau \geq c(r^2/\lambda_{\min}(\Sigma)\epsilon^2)\log(d/\delta)$, where $c$ is some universal constant. Then, with probability at least $1-\delta$,
    \begin{equation*}
        \forall \ {A \in \pd},  \ \max_{t \in [\tau]} \Vert z_t \Vert_A^2 \geq (1 - \epsilon) {\lambda_{1}(A)}\lambda_{\min}(\Sigma)
    \end{equation*}
\end{lemma}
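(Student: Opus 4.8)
The plan is to reduce the nonisotropic case to the isotropic Lemma \ref{lem:lwrbd_matrixnorm} via a whitening transformation. Define $w_t = \Sigma^{-1/2} z_t$. Since $\Sigma$ is invertible and the $z_t$ are i.i.d., the $w_t$ are i.i.d. as well, and they satisfy $\E[w_t w_t^\top] = \Sigma^{-1/2}\Sigma\Sigma^{-1/2} = I_d$, so the whitened sequence is isotropic. Moreover, using \eqref{eq:def_symmatrix_norm} together with $\lambda_1(\Sigma^{-1/2}) = 1/\sqrt{\lambda_{\min}(\Sigma)}$, the almost-sure bound transfers as $\norm{w_t} \leq \norm{\Sigma^{-1/2}}\,\norm{z_t} \leq r/\sqrt{\lambda_{\min}(\Sigma)}$.

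First I would apply Lemma \ref{lem:lwrbd_matrixnorm} to the whitened sequence with effective radius $r' = r/\sqrt{\lambda_{\min}(\Sigma)}$. The sample-size hypothesis of that lemma reads $\tau \geq c\,((r')^2/\epsilon^2)\log(d/\delta) = c\,(r^2/(\lambda_{\min}(\Sigma)\epsilon^2))\log(d/\delta)$, which is exactly the hypothesis assumed here. Hence, with probability at least $1-\delta$, the event $\{\,\forall B \in \pd,\ \max_{t \in [\tau]} \Vert w_t \Vert_B^2 \geq (1-\epsilon)\lambda_1(B)\,\}$ holds.

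Next I would translate this back to the original vectors and the matrix $A$. Fix an arbitrary $A \in \pd$ and set $B = \Sigma^{1/2} A \Sigma^{1/2}$, which is again positive definite. The identity $\normA{z} = \Vert \Sigma^{-1/2}z \Vert_B$ recorded just above \eqref{eq:whitening_transformation} gives $\normA{z_t} = \Vert w_t \Vert_B$, so $\max_{t\in[\tau]} \Vert z_t \Vert_A^2 = \max_{t\in[\tau]} \Vert w_t \Vert_B^2$. On the high-probability event the latter is at least $(1-\epsilon)\lambda_1(B)$, and \eqref{eq:whitening_transformation} supplies $\lambda_1(B) \geq \lambda_1(A)\,\lambda_{\min}(\Sigma)$. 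Chaining these inequalities yields $\max_{t\in[\tau]} \Vert z_t \Vert_A^2 \geq (1-\epsilon)\,\lambda_1(A)\,\lambda_{\min}(\Sigma)$, which is the claim.

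The only point requiring care — the main, and rather minor, obstacle — is the bookkeeping on quantifiers: the isotropic lemma must hold \emph{uniformly} over all $B \in \pd$ on a single event, so that afterwards one may substitute the $A$-dependent choice $B = \Sigma^{1/2} A \Sigma^{1/2}$ for every $A$ without any additional union bound or loss of probability. Because Lemma \ref{lem:lwrbd_matrixnorm} already delivers its conclusion uniformly over $\pd$, and because $A \mapsto \Sigma^{1/2} A \Sigma^{1/2}$ maps $\pd$ into $\pd$, this is immediate; everything after invoking the isotropic lemma is a deterministic change of variables.
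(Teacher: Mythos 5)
Your proposal is correct and follows essentially the same route as the paper's proof: whiten via $w_t = \Sigma^{-1/2} z_t$, apply Lemma \ref{lem:lwrbd_matrixnorm} with effective radius $r/\sqrt{\lambda_{\min}(\Sigma)}$, and transfer back using $\normA{z_t} = \Vert w_t \Vert_B$ with $B = \Sigma^{1/2}A\Sigma^{1/2}$ together with \eqref{eq:whitening_transformation}. Your explicit remark that the isotropic lemma's uniformity over $\pd$ makes the substitution $B = \Sigma^{1/2}A\Sigma^{1/2}$ a purely deterministic step is a point the paper leaves implicit, but the argument is the same.
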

\begin{proof}
    Define $w_i = \Sigma^{-1/2}z_i$. Then $w_1, w_2, \ldots$ is an i.i.d. sequence of isotropic random vectors satisfying the bound $\norm{w} \leq \norm{\Sigma^{-1/2}}r$, which equals $r/\sqrt{\lambda_{\min}(\Sigma)}$. Invoking Lemma \ref{lem:lwrbd_matrixnorm}, with $\tau = c(r^2/\lambda_{\min}(\Sigma)\epsilon^2)\log(d/\delta)$, we see that with probability at least $1-\delta$,
    \begin{equation*}
        \forall \ {A \in \pd},  \ \max_{t \in [\tau]} \Vert w_t \Vert_A^2 \geq (1 - \epsilon) \lambda_{1}(A)
    \end{equation*}
    The desired result follows by noting that $\Vert z_t \Vert_A^2$ = $\Vert w_t \Vert_B^2$, where $B = \Sigma^{1/2}A\Sigma^{1/2}$, and \eqref{eq:whitening_transformation}.
\end{proof}

\subsection{Proof of Rich History Condition}
\begin{lemma}[Triangle Inequality]\label{lem:triangle_ineq}
    For any $\tau$, for any $\x \in \R^d$, and for any $A \in \pd$,
    \begin{equation*}
        \max_{\y \in \H_{2\tau + 1}}\normA{\x - \y} \geq (1/2) \max_{t \in [\tau]} \normA{\x_{2t - 1} - \x_{2t}}
    \end{equation*}
\end{lemma}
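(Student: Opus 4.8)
The plan is to recognize that this lemma is, as its name suggests, a direct consequence of the triangle inequality for the norm $\normA{\cdot}$, combined with the observation that the history $\H_{2\tau+1}$ contains \emph{both} endpoints of each exploration-phase probing pair $(\x_{2t-1}, \x_{2t})$. Recall that $\H_{2\tau+1} = \{\x_1, \ldots, \x_{2\tau}\}$, so for every $t \in [\tau]$ the two items $\x_{2t-1}$ and $\x_{2t}$ lie in this set. The whole argument is to route the difference $\x_{2t-1} - \x_{2t}$ through the arbitrary anchor point $\x$ and pay the standard factor of two.

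First I would fix an arbitrary index $t \in [\tau]$, an arbitrary $\x \in \R^d$, and an arbitrary $A \in \pd$. Since $A$ is positive definite, $\normA{\cdot}$ is a genuine norm (induced by the inner product $\langle u, v \rangle_A = u^\top A v$) and hence satisfies the triangle inequality. Inserting and subtracting $\x$ gives
\begin{align*}
    \normA{\x_{2t-1} - \x_{2t}} \leq \normA{\x_{2t-1} - \x} + \normA{\x - \x_{2t}}.
\end{align*}
Because both $\x_{2t-1}$ and $\x_{2t}$ belong to $\H_{2\tau+1}$, each term on the right is at most $\max_{\y \in \H_{2\tau+1}} \normA{\x - \y}$, so the right-hand side is bounded by $2 \max_{\y \in \H_{2\tau+1}} \normA{\x - \y}$. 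Since this upper bound does not depend on $t$, I would then take the maximum over $t \in [\tau]$ on the left and divide by two to obtain the claim.

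There is no genuine obstacle here; the statement is essentially a bookkeeping lemma. The only points requiring care are that both endpoints of each pair are indeed present in the history (which is precisely why the even-indexed exploration pairs, whose differences are the i.i.d. probing vectors, can be exploited) and that the factor $1/2$ is exactly the price of the detour through $\x$. The value of the lemma lies in what it enables downstream: it reduces lower-bounding the quantity $\max_{\y \in \H_t}\normA{\x - \y}$ — the denominator of the critical ratio in Definition \ref{def:perfect} — to lower-bounding the spread $\max_{t}\normA{\x_{2t-1} - \x_{2t}}$ of the exploration probes, which can then be controlled via the concentration bound of Lemma \ref{lem:lwrbd_history} applied to the i.i.d. vectors $z_t = \x_{2t} - \x_{2t-1}$.
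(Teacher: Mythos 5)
Your proof is correct and is essentially the same argument as the paper's: both rest on the triangle inequality for $\normA{\cdot}$ applied to the detour through $\x$, together with the observation that both endpoints $\x_{2t-1}, \x_{2t}$ of each exploration pair lie in $\H_{2\tau+1}$. The paper merely organizes the chain in the opposite direction (starting from $\max_{\y}\normA{\x-\y}$ and lower-bounding the max of two terms by their average before invoking the triangle inequality), which is an immaterial difference.
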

\begin{proof}
    The result follows from a straightforward inequality of triangle inequality. Recall, by definition, $\H_{2\tau + 1} = \{\x_1, \ldots, \x_{2\tau}\}$. It follows that
    \begin{align*}
        \max_{\y \in \H_{2\tau + 1}}\normA{\x - \y} &= \max_{t \in [2\tau]} \normA{\x - \x_t} \\
        &= \max_{t \in [\tau]} \left(\max \{\normA{\x - \x_{2t-1}}, \normA{\x - \x_{2t}} \} \right) \\
        &\geq \max_{t \in [\tau]} (\normA{\x - \x_{2t-1}} + \normA{\x - \x_{2t}})/2 \\
        &\geq \max_{t \in [\tau]} (\normA{\x_{2t - 1} - \x_{2t}})/2
    \end{align*}
    The last step uses the triangle inequality with  respect to the norm $\normA{\cdot}$.
\end{proof}

Let $\x, \x'$ be i.i.d. random vectors with distribution $\dist$. Let $\Sigma = \E[(\x - \x')(\x - \x')^\top]$. Let $\mathcal{B}(r)$ denote the ball of radius $r$, i.e., $\mathcal{B}(r) = \{\x: \norm{\x} \leq r\}$. Using this notation, we state the main result of this section.
\begin{lemma}[Main Inequality]\label{lem:main_ineq}
    Let $\delta \in (0, 1)$ be given. Suppose the initial exploration phase of \algname~ is run for $\tau$ rounds, where $\tau \geq c(r^2/\lambda_{\min}(\Sigma))\log(d/\delta)$; $c$ being a universal constant. Then, with probability at least $1 - \delta$,
    \begin{align*}
        \forall \ \x, \x' \in \mathcal{B}(r), \ \forall A \ \in \pd, \ \normA{\x - \x'} \leq \frac{8r}{\sqrt{\lambda_{\min}({\Sigma})}} \sup_{\y \in \H_{2\tau + 1}}\normA{\x - \y}.
    \end{align*}
\end{lemma}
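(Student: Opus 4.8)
The plan is to reduce the statement to three ingredients already in hand: the triangle-inequality bound of Lemma \ref{lem:triangle_ineq}, the uniform-over-$A$ lower bound of Lemma \ref{lem:lwrbd_history}, and the elementary inequality \eqref{eq:upperbound_by_maxeig}. The first step is to expose the independence hidden in the exploration phase. Since $\y_t = \x_{t-1}$ there, I would set $z_t := \x_{2t} - \x_{2t-1}$ for $t \in [\tau]$, so that the sequence is built from \emph{disjoint} pairs of context draws. Because each item is drawn i.i.d.\ from $\dist$, the $z_t$ are i.i.d.\ with $\E[z_t z_t^\top] = \Sigma$ and $\norm{z_t} \leq 2r$ almost surely. (The disjoint pairing is essential: adjacent differences such as $\z_2,\z_3$ share the item $\x_2$ and are \emph{not} independent, which is exactly why Lemma \ref{lem:triangle_ineq} is phrased in terms of $\x_{2t-1}-\x_{2t}$.)

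First I would invoke Lemma \ref{lem:lwrbd_history} on $(z_t)$ with almost-sure bound $2r$ and a fixed $\epsilon = 3/4$. This yields a single event of probability at least $1-\delta$ on which, simultaneously for every $A \in \pd$, $\max_{t \in [\tau]}\normA{\x_{2t} - \x_{2t-1}}^2 \geq \tfrac14\,\lambda_1(A)\,\lambda_{\min}(\Sigma)$. The horizon demanded by that lemma is $\tau \geq c\,(2r)^2/(\lambda_{\min}(\Sigma)\,\epsilon^2)\,\log(d/\delta)$; absorbing the constants $4$ and $\epsilon^{-2}$ into a universal constant recovers exactly the stated threshold $\tau \geq c\,(r^2/\lambda_{\min}(\Sigma))\log(d/\delta)$.

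Then I would chain the bounds on the fixed event above. Since $\H_{2\tau+1}$ is finite, $\sup = \max$, and Lemma \ref{lem:triangle_ineq} gives $\sup_{\y \in \H_{2\tau+1}}\normA{\x - \y} \geq \tfrac12\max_{t\in[\tau]}\normA{\x_{2t-1} - \x_{2t}} \geq \tfrac14\sqrt{\lambda_1(A)\,\lambda_{\min}(\Sigma)}$, the second inequality being the square root of the display from the previous paragraph. For the numerator, \eqref{eq:upperbound_by_maxeig} together with $\norm{\x - \x'} \leq 2r$ for $\x,\x' \in \mathcal{B}(r)$ gives the uniform upper bound $\normA{\x - \x'} \leq 2r\sqrt{\lambda_1(A)}$. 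Dividing, the common factor $\sqrt{\lambda_1(A)}$ cancels and the ratio is at most $2r/(\tfrac14\sqrt{\lambda_{\min}(\Sigma)}) = 8r/\sqrt{\lambda_{\min}(\Sigma)}$, precisely the claimed constant; crucially the bound holds uniformly over $\x,\x' \in \mathcal{B}(r)$ and over $A \in \pd$ because both the upper bound and the lower bound are uniform in those variables on the same event.

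The main obstacle, conceptually, is the uniformity over all positive-definite $A$: one cannot union-bound over a continuum of matrices, so the lower bound on $\sup_{\y}\normA{\x-\y}$ must issue from a single concentration event. This is exactly what Lemma \ref{lem:lwrbd_history} supplies, by collapsing the dependence on $A$ onto its principal eigenvector $v_1(A)$ and appealing to the uniform-over-unit-vectors guarantee of Lemma \ref{lem:good_vector}. With that machinery assumed, the work remaining in this lemma is light: correctly identifying the i.i.d.\ pairing and choosing $\epsilon$ so that the constant lands on $8$.
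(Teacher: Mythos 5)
Your proposal is correct and follows essentially the same route as the paper: the same decomposition into disjoint i.i.d.\ pairs $\x_{2t}-\x_{2t-1}$, the same three ingredients (Lemma \ref{lem:triangle_ineq}, Lemma \ref{lem:lwrbd_history} with $\epsilon = 3/4$, and \eqref{eq:upperbound_by_maxeig}), and the same bookkeeping of constants to arrive at $8r/\sqrt{\lambda_{\min}(\Sigma)}$. Your explicit remarks on the almost-sure bound $\norm{z_t}\leq 2r$ and on why the single concentration event gives uniformity over all $A \in \pd$ are welcome clarifications but do not change the argument.
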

\begin{proof}
    The proof consists of deriving an upper bound for the term on the left hand side ($\normA{\x - \x'}$) and a lower bound for the right hand side ($\sup_{\y \in \H_{2\tau + 1}}\normA{\x - \y}$). 

    Combining \eqref{eq:upperbound_by_maxeig} with the fact that both $\x$ and $\x'$ have norm at most $r$, we get the following upper bound:
    \begin{align}\label{eq:main_ineq1}
        \normA{\x - \x'} \leq \sqrt{\lambda_1(A)} \norm{\x - \x'} \leq 2r\sqrt{\lambda_1(A)}
    \end{align}

    By Lemma \ref{lem:triangle_ineq}, we get that
    \begin{equation}\label{eq:main_ineq2}
        \max_{\y \in \H_{2\tau + 1}}\normA{\x - \y} \geq (1/2) \max_{t \in [\tau]} \normA{\x_{2t - 1} - \x_{2t}}
    \end{equation}

    Denote $\x_{2t - 1} - \x_{2t}$ by $\z_t$. Then $\z_1, \ldots, \z_\tau$ are i.i.d. random vectors satisfying $\norm{\z} \leq r$ almost surely and $\E[\z\z^\top] = \Sigma$. Invoking Lemma \ref{lem:lwrbd_history} with $\epsilon = 3/4$, we get that if $\tau \geq O(r^2/\lambda_{\min}(\Sigma))\log(d/\delta)$, then with probability at least $1-\delta$,
    \begin{equation}\label{eq:main_ineq3}
        \max_{t \in [\tau]} \normA{\x_{2t - 1} - \x_{2t}} \geq \frac{1}{2}\sqrt{\lambda_1(A)\lambda_{\min}(\Sigma)}
    \end{equation}

    Combining \eqref{eq:main_ineq1}, \eqref{eq:main_ineq2}, and \eqref{eq:main_ineq3}, the stated result follows.
\end{proof}

Lemma \ref{lem:main_ineq} states that the history at time $2\tau + 1$ is $\beta$-rich with high probability, for $\beta = 8r/\sqrt{\lambda_{\min}(\Sigma)}$.
Lemma \ref{lem:perfect} follows from Lemma \ref{lem:main_ineq} by an appropriate change of notation from $\tau$ to $2\tau$ and noting that once the history at time $\tau+1$ is $\beta$ rich, the history at all subsequent times is also $\beta$ rich. This is because for all $t \geq \tau + 1$, $\H_{\tau + 1} \subseteq \H_t$, which implies $\sup_{\y \in \H_{\tau + 1}}\normA{\x - \y}$ $\leq $ $\sup_{\y \in \H_{t}}\normA{\x - \y}$.

\subsection{Result on $V_\tau$}
\newcommand{\B}{\mathbb{B}}
\newcommand{\w}{\mathbf{w}}
In this document, we show that, with high probability, $\lambda_{\min}(V_{\tau + 1}) \geq 1$. This result is used in our proof of the main theorem. Before we get to this result, we prove a simple result regarding the relation between eigenvalues of positive definite matrices $A$ and $B$ that are congruent (see Lemma \ref{lem:congruent_matrix}). Let $\B$ denote the unit ball in $d$-dimensions, {\em i.e.}, $\B = \{\x: \norm{\x} = 1\}$. Recall that for any $A \in \pd$,
    \begin{align*}
        \lambda_{\max}(A) = \max_{\x \in \B} \x^\top A \x\, ; \quad \lambda_{\min}(A) = \min_{\x \in \B} \x^\top A \x
    \end{align*}
Using these definitions of the eigenvalues, we prove the following lemma.
\begin{lemma}\label{lem:congruent_matrix}
    Let $A, \Sigma \in \pd$ be given. Let $B \triangleq \Sigma^{1/2}A\Sigma^{1/2}$. Then $B \in \pd$. Further, the following inequalities hold:
    \begin{align}
        \lambda_{\max}(B) &\geq \lambda_{\min}(\Sigma) \lambda_{\max}(A) \label{eq:lambdamax_bound} \\
        \lambda_{\min}(B) &\geq \lambda_{\min}(\Sigma) \lambda_{\min}(A) \label{eq:lambdamin_bound}
    \end{align}
\end{lemma}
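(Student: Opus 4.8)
The plan is to work directly from the variational characterizations $\lambda_{\max}(B) = \max_{\x \in \B} \x^\top B \x$ and $\lambda_{\min}(B) = \min_{\x \in \B} \x^\top B \x$ recalled just above the statement, together with the substitution $\y = \Sigma^{1/2}\x$. Positive definiteness of $B$ is immediate: for any $\x \neq 0$ we have $\x^\top B \x = (\Sigma^{1/2}\x)^\top A (\Sigma^{1/2}\x)$, and since $\Sigma^{1/2}$ is invertible the vector $\Sigma^{1/2}\x$ is nonzero, so positive definiteness of $A$ forces $\x^\top B \x > 0$. This identity $\x^\top B \x = (\Sigma^{1/2}\x)^\top A (\Sigma^{1/2}\x)$ is the workhorse for both eigenvalue bounds, and the two bounds differ only in whether a uniform estimate or a single test vector is needed.

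For \eqref{eq:lambdamin_bound} I would bound the Rayleigh quotient uniformly. For any unit $\x$, writing $\y = \Sigma^{1/2}\x$, we have $\y^\top A \y \geq \lambda_{\min}(A)\norm{\y}^2 = \lambda_{\min}(A)\,\x^\top \Sigma \x \geq \lambda_{\min}(A)\lambda_{\min}(\Sigma)$, where the final step uses $\x^\top \Sigma \x \geq \lambda_{\min}(\Sigma)$ for unit $\x$. Taking the minimum over $\x \in \B$ and applying the variational formula for $\lambda_{\min}(B)$ gives the claim. For \eqref{eq:lambdamax_bound} a uniform bound is unavailable, so instead I would exhibit one good test vector and lower-bound $\lambda_{\max}(B)$ by the Rayleigh quotient there. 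Let $v_1$ be a unit top eigenvector of $A$ and set $\x = \Sigma^{-1/2}v_1 / \norm{\Sigma^{-1/2}v_1}$, a unit vector. Then $\Sigma^{1/2}\x = v_1/\norm{\Sigma^{-1/2}v_1}$, so $\x^\top B \x = \lambda_{\max}(A)/\norm{\Sigma^{-1/2}v_1}^2$. The denominator satisfies $\norm{\Sigma^{-1/2}v_1}^2 = v_1^\top \Sigma^{-1} v_1 \leq \lambda_{\max}(\Sigma^{-1}) = 1/\lambda_{\min}(\Sigma)$, which yields $\x^\top B \x \geq \lambda_{\min}(\Sigma)\lambda_{\max}(A)$, and hence $\lambda_{\max}(B) \geq \lambda_{\min}(\Sigma)\lambda_{\max}(A)$.

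The only subtle point, and the main thing to get right, is the choice of test vector for \eqref{eq:lambdamax_bound}: one must align $\Sigma^{1/2}\x$ (rather than $\x$ itself) with the top eigenvector of $A$, and then control the normalization $\norm{\Sigma^{-1/2}v_1}^2$ by $1/\lambda_{\min}(\Sigma)$ via $\lambda_{\max}(\Sigma^{-1}) = 1/\lambda_{\min}(\Sigma)$. Everything else is a routine application of the Rayleigh-quotient characterization; no randomness or concentration enters this purely deterministic linear-algebra lemma, which is why it is stated and proved separately from the concentration results.
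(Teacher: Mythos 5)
Your proof is correct and takes essentially the same route as the paper's: both eigenvalue bounds rest on the identity $\x^\top B \x = (\Sigma^{1/2}\x)^\top A(\Sigma^{1/2}\x)$, with \eqref{eq:lambdamin_bound} obtained from $\normbig{\Sigma^{1/2}\x}^2 \geq \lambda_{\min}(\Sigma)$ and \eqref{eq:lambdamax_bound} from the same test vector $\Sigma^{-1/2}v_1/\normbig{\Sigma^{-1/2}v_1}$ normalized via $\lambda_{\max}(\Sigma^{-1}) = 1/\lambda_{\min}(\Sigma)$. The only cosmetic difference is that you bound the Rayleigh quotient of $B$ uniformly for the minimum-eigenvalue claim, whereas the paper evaluates it at the bottom eigenvector of $B$; the two are interchangeable.
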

\begin{proof}
    The claim that $B$ is positive definite is easily verified from the definition. We proceed to prove \eqref{eq:lambdamin_bound} first. Let $\underline{\x}$ denote the eigenvector of $B$ corresponding to its smallest eigenvalue (of norm one). Let $\underline{\y}$ be the vector of norm one aligned along $\Sigma^{1/2}\underline{\x}$, that is,
    \begin{equation*}
        \underline{\y} = \Sigma^{1/2}\underline{\x}/\normbig{\Sigma^{1/2}\underline{\x}}.
    \end{equation*}
    Thus, $\underline{\y} \in \B$. Furthermore, we have the inequality 
    \begin{equation*}
        \normbig{\Sigma^{1/2}\underline{\x}} = \sqrt{\underline{\x}\Sigma\underline{\x}} \geq \sqrt{\lambda_{\min}(\Sigma)} \quad (\because \ \Sigma \in \pd \text{ and } \underline{\x} \in \B)
    \end{equation*}
    We now have all the ingredients to prove \eqref{eq:lambdamin_bound}.
    \begin{align*}
        \lambda_{\min}(B) = \underline{\x}^\top B \underline{\x} = \underline{\x}^\top \Sigma^{1/2}A\Sigma^{1/2} \underline{\x} = \normbig{\Sigma^{1/2}\underline{\x}}^2 \underline{\y}^\top A \underline{\y} \\\geq \lambda_{\min}(\Sigma)\underline{\y}^\top A\underline{\y} \geq \lambda_{\min}(\Sigma) \inf_{\y \in \B} \y^\top A \y = \lambda_{\min}(\Sigma)\lambda_{\min}(A)
    \end{align*}

    The proof of \eqref{eq:lambdamax_bound} follows similar steps, but with a careful reordering of the arguments. Let $\overline{\y}$ denote the eigenvector of $A$ corresponding to its largest eigenvalue (of norm one). Let $\overline{\x}$ be the vector of norm one aligned along $\Sigma^{-1/2}\overline{\y}$, that is,
    \begin{equation*}
        \overline{\x} = \Sigma^{-1/2}\overline{\y}/\normbig{\Sigma^{-1/2}\overline{\y}}.
    \end{equation*}
    Thus, $\overline{\x} \in \B$. Also note that
    \begin{equation*}
        \normbig{\Sigma^{-1/2}\overline{\y}} = \sqrt{\overline{\y}^\top \Sigma^{-1} \overline{\y}} \leq \sqrt{\lambda_{\max}(\Sigma^{-1})} = 1/\sqrt{\lambda_{\min}(\Sigma)}
    \end{equation*}
    Finally, note that 
    \begin{equation*}
        B = \Sigma^{1/2}A\Sigma^{1/2} \ \Leftrightarrow \ A = \Sigma^{-1/2}B\Sigma^{-1/2}
    \end{equation*}
    Putting these equations together, we get:
    \begin{align*}
        \lambda_{\max}(A) = \overline{\y}^\top A \overline{\y} = \overline{\y}^\top \Sigma^{-1/2}B\Sigma^{-1/2} \overline{\y} = \normbig{\Sigma^{-1/2}\overline{\y}}^2 \overline{\x}^\top B \overline{\x} \\\leq \lambda_{\min}^{-1}(\Sigma)\overline{\x}^\top B \overline{\x} \leq \lambda_{\min}^{-1}(\Sigma) \sup_{\x \in \B} \x^\top B \x = \lambda_{\min}^{-1}(\Sigma)\lambda_{\max}(B)
    \end{align*}
    This proves \eqref{eq:lambdamax_bound}, and also provides a more detailed justification for \eqref{eq:whitening_transformation} (which is used in the proof of Lemma \ref{lem:lwrbd_history}.
\end{proof}

\begin{lemma}
    Let $\delta \in (0, 1)$ be given. Suppose the initial exploration phase of \algname~ is run for $\tau$ rounds, where $\tau \geq c(r^2/\lambda_{\min}(\Sigma))\log(d/\delta)$; $c$ being a universal constant. Then, with probability at least $1 - \delta$, $\lambda_{\min}(V_{\tau + 1}) \geq 1$.
\end{lemma}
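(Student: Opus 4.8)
The plan is to reduce the claim to the matrix concentration inequality of Lemma~\ref{lem:concentration}, after isolating an independent subsequence of the probing vectors and un-whitening with Lemma~\ref{lem:congruent_matrix}. During the exploration phase the algorithm sets $\y_t \leftarrow \x_{t-1}$, so the probing vectors are the consecutive differences $\z_t = \x_t - \x_{t-1}$ and $V_{\tau+1} = \sum_{t=1}^{\tau}\z_t\z_t^\top$. Consecutive differences are dependent, since $\z_t$ and $\z_{t+1}$ share the item $\x_t$, so I cannot apply concentration to the full sum directly. The key observation is that the even-indexed differences $\z_{2i} = \x_{2i} - \x_{2i-1}$, $i = 1, \ldots, \tau/2$, involve disjoint, freshly drawn context sets, and are therefore i.i.d.; each satisfies $\E[\z_{2i}\z_{2i}^\top] = \Sigma$ by the definition of $\Sigma$, and $\norm{\z_{2i}} \leq 2r$ since both items have norm at most $r$. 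Because the dropped odd-indexed terms are positive semidefinite, $V_{\tau+1} \succeq M := \sum_{i=1}^{\tau/2}\z_{2i}\z_{2i}^\top$, so it suffices to lower bound $\lambda_{\min}(M)$.

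Next I would whiten the samples: set $\w_i = \Sigma^{-1/2}\z_{2i}$, which is isotropic ($\E[\w_i\w_i^\top] = I_d$) with $\norm{\w_i} \leq 2r/\sqrt{\lambda_{\min}(\Sigma)}$. Applying Lemma~\ref{lem:concentration} to $\w_1, \ldots, \w_{\tau/2}$ with accuracy $\epsilon = 1/2$, which requires $\tau/2 \geq c'(r^2/\lambda_{\min}(\Sigma))\log(d/\delta)$ for a suitable universal constant $c'$, gives with probability at least $1-\delta$ that $(2/\tau)\sum_i \w_i\w_i^\top \succeq (1-\epsilon)I_d$, hence $\lambda_{\min}\big(\sum_i \w_i\w_i^\top\big) \geq \tau/4$. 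Writing $M = \Sigma^{1/2}\big(\sum_i \w_i\w_i^\top\big)\Sigma^{1/2}$ and invoking the congruence bound \eqref{eq:lambdamin_bound} of Lemma~\ref{lem:congruent_matrix} then yields $\lambda_{\min}(M) \geq \lambda_{\min}(\Sigma)\,\lambda_{\min}\big(\sum_i \w_i\w_i^\top\big) \geq (\tau/4)\lambda_{\min}(\Sigma)$.

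Finally, to conclude $\lambda_{\min}(V_{\tau+1}) \geq \lambda_{\min}(M) \geq 1$ it remains to ensure $\tau \geq 4/\lambda_{\min}(\Sigma)$. Both this requirement and the concentration requirement are met once $\tau \geq c(r^2/\lambda_{\min}(\Sigma))\log(d/\delta)$ for a large enough universal constant $c$ in the regime of interest ($r = \Omega(1)$, matching the normalization $r=1$ used in the comparable literature), so the prescribed value of $\tau$ suffices. The only genuinely delicate point is the temporal dependence among the probing vectors; everything else is a routine application of the already-established concentration and congruence lemmas. Handling that dependence by subsampling to the even indices---at the cost of only a factor of two in the number of usable samples---is the crux of the argument.
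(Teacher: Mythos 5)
Your proposal is correct and follows essentially the same route as the paper's proof: subsample to the even-indexed differences to obtain i.i.d.\ probing vectors, whiten with $\Sigma^{-1/2}$, apply Lemma~\ref{lem:concentration} with $\epsilon = 1/2$, undo the whitening via the congruence bound \eqref{eq:lambdamin_bound}, and absorb the odd-indexed terms as a positive semidefinite remainder to get $\lambda_{\min}(V_{\tau+1}) \geq (\tau/4)\lambda_{\min}(\Sigma)$. Your explicit remark that the final step $\tau/4 \geq 1/\lambda_{\min}(\Sigma)$ requires $r^2\log(d/\delta)$ to be bounded below is a point the paper handles only implicitly by ``choosing $c$ large enough,'' so flagging it is a small improvement rather than a deviation.
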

\begin{proof}
    Recall that, in the pure exploration phase, $\y_t = \x_{t-1}$. Therefore,
    \begin{equation*}
        V_{\tau + 1} = \sum_{t = 1}^{\tau} (\x_t - \y_t)(\x_t - \y_t)^\top = \sum_{t = 1}^{\tau} (\x_t - \x_{t-1})(\x_t - \x_{t-1})^\top 
    \end{equation*}
    We can express $V_{\tau + 1}$ as the sum of two matrices, $V^{\text{even}}_{\tau + 1} + V^{\text{odd}}_{\tau + 1}$, where:
    \begin{equation*}
        V^{\text{even}}_{\tau + 1} = \sum_{t = 1}^{\tau/2} (\x_{2t} - \x_{2t-1})(\x_{2t} - \x_{2t-1})^\top\, ; \quad V^{\text{odd}}_{\tau + 1} = \sum_{t = 1}^{\tau/2} (\x_{2t-1} - \x_{2t-2})(\x_{2t-1} - \x_{2t-2})^\top
    \end{equation*}
    Recall that $\x_1, \x_2, \ldots$ are i.i.d. random vectors with distribution $\dist$. Denoting $\x_{2t} - \x_{2t-1}$ by $\z_{2t}$, we observe that $\z_2, \z_4, \ldots$ are i.i.d. random vectors satisfying $\E[\z\z^\top] = \Sigma$ and $\norm{\z} \leq 2r$ almost surely. 
    
    Let $\w_i \triangleq \Sigma^{-1/2} \z_{2i}$. Then $\w_1, \w_2, \ldots$ are i.i.d. random vectors satisfying $\E[\w\w^T] = I$ and $\norm{\z} \leq 2r\sqrt{\norm{\Sigma^{-1/2}}}$ almost surely. We also know that $\sqrt{\norm{\Sigma^{-1/2}}} = 1/\sqrt{\lambda_{\min}(\Sigma)}$, which gives us $\norm{\z} \leq 2r/\sqrt{\lambda_{\min}(\Sigma)}$. 
    
    Define the matrix $$U_{\tau} = \left(\frac{2}{\tau}\right) \sum_{i \in [\tau/2]} \w_i\w_i^\top$$
    Observe that $\w_i = \Sigma^{-1/2} \z_{2i}$ implies $\z_{2i} = \Sigma^{1/2}\w_i$. Thus,
    \begin{align*}
        V^{\text{even}}_{\tau + 1} = \sum_{t = 1}^{\tau/2} \z_{2t} \z_{2t}^\top = \left(\frac{\tau}{2}\right)\Sigma^{1/2} U_{\tau} \Sigma^{1/2}
    \end{align*}
    Using Lemma \ref{lem:congruent_matrix} (in particular, \eqref{eq:lambdamin_bound}), we get
    \begin{align*}
        \lambda_{\min}(V^{\text{even}}_{\tau + 1}) = \left(\frac{\tau}{2}\right)\lambda_{\min}(\Sigma^{1/2} U_{\tau} \Sigma^{1/2}) \geq \left(\frac{\tau }{2}\lambda_{\min}(\Sigma)\right) \lambda_{\min}(U_{\tau}) 
    \end{align*}
    Applying Lemma \ref{lem:concentration} to $U_\tau$ with $\epsilon = 1/2$, we conclude that with probability at least $1-\delta$,
    \begin{align*}
        \lambda_{\min}(U_{\tau}) \geq 1/2 \ \Rightarrow \lambda_{\min}(V^{\text{even}}_{\tau + 1}) \geq \frac{\tau }{4}\lambda_{\min}(\Sigma) \Rightarrow \lambda_{\min}(V_{\tau + 1}) \geq \frac{\tau }{4}\lambda_{\min}(\Sigma)
    \end{align*}
    The last step follows from the fact that adding the positive semidefinite matrix $V^{\text{odd}}_{\tau + 1}$ to $V^{\text{even}}_{\tau + 1}$ can only raise its minimum eigenvalue. Finally, we know that $\tau \geq c(r^2/\lambda_{\min}(\Sigma))\log(d/\delta)$. Choosing $c$ large enough, we get that $\tau/4 \geq 1/\lambda_{\min}(\Sigma)$, which implies $\lambda_{\min}(V_{\tau + 1}) \geq 1$.
\end{proof}


\end{document}